\newtheorem{remark}{Remark}
\newtheorem{proposition}{Proposition}
\newtheorem{definition}{Definition}
\title[AAMAS-2026 Formatting Instructions]{Do LLMs Strategically Reveal, Conceal, and Infer Information? \\ A Theoretical and Empirical Analysis in The Chameleon Game}
\author{Mustafa O. Karabag}
\affiliation{
  \institution{The University of Texas at Austin}
    \country{}}
\author{Jan Sobotka}
\affiliation{
  \institution{École Polytechnique Fédérale de Lausanne}
  \country{}}
\author{Ufuk Topcu}
\affiliation{
  \institution{The University of Texas at Austin}
  \country{}}
\renewcommand\footnotetextcopyrightpermission[1]{} %
\begin{abstract}
Large language model-based (LLM-based) agents have become common in settings that include non-cooperative parties. In such settings, agents' decision-making needs to conceal information from their adversaries, reveal information to their cooperators, and infer information to identify the other agents' characteristics. To investigate whether LLMs have these information control and decision-making capabilities, we make LLM agents play the language-based hidden-identity game, The Chameleon. In this game, a group of non-chameleon agents who do not know each other aim to identify the chameleon agent without revealing a secret. The game requires the aforementioned information control capabilities both as a chameleon and a non-chameleon. We begin with a theoretical analysis for a spectrum of strategies, from concealing to revealing, and provide bounds on the non-chameleons' winning probability. The empirical results with GPT, Gemini 2.5 Pro, Llama 3.1, and Qwen3 models show that while non-chameleon LLM agents identify the chameleon, they fail to conceal the secret from the chameleon, and their winning probability is far from the levels of even trivial strategies. Based on these empirical results and our theoretical analysis, we deduce that LLM-based agents may reveal excessive information to agents of unknown identities. Interestingly, we find that, when instructed to adopt an information-revealing level, this level is linearly encoded in the LLM's internal representations. While the instructions alone are often ineffective at making non-chameleon LLMs conceal, we show that steering the internal representations in this linear direction directly can reliably induce concealing behavior.

\end{abstract}
\keywords{Large language models, Hidden identity games, Strategic decision-making, Information concealment}
\newcommand{\BibTeX}{\rm B\kern-.05em{\sc i\kern-.025em b}\kern-.08em\TeX}
\newcommand{\numplayers}{N}
\newcommand{\category}{C}
\newcommand{\word}{w}
\newcommand{\words}{W}
\newcommand{\numwordspercategory}{K}
\newcommand{\order}{o}
\newcommand{\secretword}{\word^{*}}
\newcommand{\response}{r}
\newcommand{\responses}{R}
\newcommand{\chameleon}{c}
\newcommand{\distribution}{\mathcal{D}}
\newcommand{\ra}[1]{\renewcommand{\arraystretch}{#1}}
\newcommand{\supptext}[2]{%
  \ifthenelse{\boolean{supversion}}{#1}{#2}%
}
\begin{document}

\pagestyle{fancy}
\fancyhead{}

\maketitle 

\section{Introduction}

LLM-based agents have become common in domains including education~\cite{xiao2023evaluating,alier2023smart}, healthcare~\cite{karabacak2023embracing,singhal2023large}, finance~\cite{wu2023bloomberggpt,yang2023fingpt}, and software development~\cite{lin2024llm,dong2024self}. These agents need to operate in the presence of other autonomous agents and humans where the interactions are not necessarily cooperative, such as negotiation scenarios~\cite{abdelnabi2024cooperation,bianchi2024well}. In these interactions, LLMs need to strategically strike a balance between potentially \textit{conflicting objectives} of concealing information from their adversaries, revealing information to their potential cooperators, and inferring the characteristics of others.

To investigate whether LLMs have such information control and decision-making capabilities, we use a language-based hidden-identity board game, The Chameleon~\cite{chameleon}. We theoretically analyze the strategies for the game and investigate the performance of existing models, GPT-5~\cite{openai_gpt5}, GPT-4.1~\cite{openai_gpt41}, GPT-4o~\cite{openai_gpt4_o}, Gemini 2.5 Pro~\cite{gemini}, Llama 3.1 70B~\cite{llama3}, and Qwen3 32B~\cite{qwen3}\footnote{We use instruction-tuned and quantized versions of the open-weight models Llama 3.1 and Qwen3 for efficiency of the experiments.}.

The Chameleon is a zero-sum hidden-identity board game between a chameleon player and $N-1$ non-chameleon players. The non-chameleons jointly know a set of words and a secret word randomly chosen from this set, but they do not know each other. The chameleon, on the other hand, does not know the secret word but knows the set of potential secret words. The players, including the chameleon, respond with a single word in a randomly chosen order. Then, the players vote to identify the chameleon. If the chameleon is correctly identified, it is asked to guess the secret word in the second stage. The chameleon wins the game if it is not identified after voting or if it guesses the secret word correctly.

We theoretically analyze the game by considering that the non-chameleons can share a pre-agreed strategy. The chameleon knows the strategy but not the secret word. We first consider two families of stationary strategies, pairwise concealing and revealing strategies, that do not take the previous responses into account. A pairwise concealing strategy shares similar response distributions for each pair of words. We show that under such concealing strategies, the non-chameleons fail to catch the chameleon in the voting stage with high probability, i.e., $1- \mathcal{O}(\nicefrac{1}{\numplayers})$. A pairwise revealing strategy shares distinct response distributions for each pair of possible secret words. We show that under such revealing strategies, the non-chameleons lose the game with high probability, i.e., $1-\mathcal{O}(\exp(-\numplayers))$ in the second stage. We then consider a history-dependent strategy that eliminates some of the possible secret words in each response while preserving the maximum ambiguity over the remaining ones. We show that using such a revealing and concealing strategy, the non-chameleons win the game with probability at least $\mathcal{O}(\nicefrac{\log(\numplayers)}{\numplayers})$.

We instruct the aforementioned LLMs
to play the Chameleon with each other. %
The empirical win ratio of non-chameleons is far below the theoretically achievable level of $\mathcal{O}(\nicefrac{\log(N)}{N})$ and the trivially achievable level of $\mathcal{O}(\nicefrac{1}{N})$. For example, for 4-player games with 16 words per category, the empirical win ratio of the GPT-5 non-chameleons is $0\%$ against a GPT-5 chameleon, while the non-chameleons could trivially achieve $23\%$ if they were ignoring the secret word information. The results show that while the non-chameleons identify the chameleon successfully with high probability, the chameleon successfully infers the secret word from the non-chameleons' answers. For example, the identification rate of the GPT-5 non-chameleons is $64\%$ against a GPT-5 chameleon. 

Combined with the theoretical analysis of pairwise revealing strategies, we deduce that non-chameleon LLMs reveal excessive information as they lose the game in the second stage. Given the failures of LLMs in the game, we first instruct LLMs to adopt different levels of information concealment. In turn, our Principal Component Analysis of the LLM's internal representations shows that the information concealment level is clearly reflected in these hidden states\footnote{We use the terms \textit{internal representations} and \textit{hidden states} interchangeably.} as a linear direction. However, even at the most extreme level of instructed concealment, non-chameleons perform similarly to the nominal case. To achieve information concealment levels that are beyond what is achievable by instructions, we use the linear direction to design a (category-independent) steering vector of desired strength, which is added as a constant bias to hidden state values instead of instructions. We observe that this approach results in the chameleon guessing the secret with a significantly lower success rate and, thereby, improves the win rates of the non-chameleons.

We make the following contributions:
\begin{itemize}
\setlength\itemsep{0em}
    \item We provide a theoretical analysis of the language-based hidden-identity game, The Chameleon~\cite{chameleon}. We consider a spectrum of strategies, from revealing to concealing, and provide bounds on the achievable performance levels. 
    \item We evaluate the performance of various LLMs and show that non-chameleon LLMs lose with high probability as they use revealing strategies, indicating that LLMs may lack rationality due to their poor information-concealing skills.
    \item We show that when instructions are used, the information concealment level is linearly encoded in internal representations, but instructions alone are not effective. Using this linear direction, we deploy hidden state steering and improve performance by amplifying information concealment beyond what is achievable with instructions.
\end{itemize}

We note that while our theoretical and empirical results analyze the information control skills of LLMs in the Chameleon game, they offer broader insights on the LLMs' lack of suitability for strategic interactions involving informational asymmetries. 
Combining theoretical and empirical results, our key observations are that (i) LLMs may have poor game-theoretic rationality due to their limited capacity for information concealment, (ii) while prompt-based instructions are somewhat reflected in internal states, the induced behavioral changes for information concealment are not sufficient. Given the improved performance via steering, we hypothesize that LLMs have an internal understanding of how to conceal information. However, their realization of it is weak since they fail to assess how much their actions influence others.

\section{Related Work}

\textbf{\textit{Strategic decision-making and LLMs:}}
Strategic decision-making and learning capabilities of LLMs have been studied using various models, including multi-armed bandits~\cite{krishnamurthy2024can,park2024llm} and canonical games from game theory~\cite{brookins2023playing,lore2023strategic,akata2023playing}. As LLMs often do not exhibit rational behavior in games, existing works focused on improving the performance of LLMs by developing agent architectures~\cite{park2023generative,yang2024reasoning,thudiumoutwit,guo2023suspicion}, fine-tuning LLMs~\cite{huang2024pokergpt}, and using game-theoretic solvers~\cite{gemp2024steering,hua2024game}. 

While most of the existing works focus on empirically analyzing and improving the performance of LLMs for games, we provide theoretical and empirical analyses for the hidden identity game The Chameleon. Our results show that the irrational behavior of LLMs in strategic interactions may stem from losing informational advantage as they share excessive information with other agents of unknown identities.

\textbf{\textit{Social deduction games and LLMs:}}
Conversation-based social deduction games, including hidden role/identity games, provide a natural medium to test the capabilities of LLMs. Existing works showed LLMs have promising capabilities in games such as Werewolf~\cite{xu2023exploring,xu2023language}, Diplomacy~\cite{meta2022human}, Welfare Diplomacy~\cite{mukobi2023welfare}, Hoodwinked~\cite{o2023hoodwinked}, The Chameleon~\cite{xu2024magic}, and Spyfall~\cite{kim2023generative}. These games rely on open-ended discussions and negotiations between the players, and existing studies focus on evaluating or improving the empirical performance of LLMs. 

The simple nature of The Chameleon allows us to develop a model of the game and theoretically analyze it. Different from the existing works, we identify the strengths and weaknesses of LLMs based on both theory and experiments. The work \cite{kim2023generative} made the observation that non-spy players rely too much on keywords in the game of Spyfall, providing hints to the spy. Our theoretical and empirical results for The Chameleon support this observation. The work \cite{xu2024magic} uses The Chameleon to empirically analyze the skills of LLMs, such as judgment, reasoning, and deception, using handcrafted evaluation metrics. Different from \cite{xu2024magic}, we theoretically analyze different strategies for this game, assess the information control skills of LLMs by relating their empirical behavior to the theory, and show that the poor performance of non-chameleon LLMs is due to excessive revealing of information.

\textbf{\textit{LLMs, lying, and deception:}}
Recent studies on LLMs show that LLMs can lie to deceive others~\cite{hagendorff2024deception,burger2024truth,hou2024large} and detect deceivers~\cite{yoo2024finding}. These works show that LLMs can deliberately mislead others given a context, e.g., lying to a thief about the location of a valuable item~\cite{hagendorff2024deception}. The Chameleon, on the other hand, does not require the players to explicitly lie in their responses to induce false beliefs. Instead, it requires them to be ambiguous to conceal the secret word or their identity: Providing a response to mislead the chameleon can make a non-chameleon get voted by the others, since it fails to reveal that it is not the chameleon. The chameleon, on the other hand, needs to provide ambiguous answers to conceal its lack of secret word knowledge. 

\textbf{\textit{Privacy, security, and LLMs:}}
LLMs exhibit various privacy and security vulnerabilities~\cite{yao2024survey, wang2024unique,das2024security}. Works on LLM privacy mainly consider whether LLMs can conceal information from adversaries who aim to actively extract secret information. On the other hand, in a non-zero-sum incomplete information game, the rational agents do not necessarily only conceal information, and they, on their own, may reveal information to the other parties to improve cooperation. 
Our work shows that in an incomplete information game setting, as a result of their decision-making, LLMs cannot strike a balance between these objectives and leak secret information without an external adversary.

\textbf{\textit{Controlling LLMs via hidden-state interventions:}}
Directly intervening on the internal states of LLMs to guide their behavior is a growing area of research known as activation or representation engineering~\cite{turner2023steering,zou2023representation}. A common technique involves creating \textit{steering vectors} by taking the difference between activations from contrasting prompts, which, when added to the model's residual stream, can shift its output towards (un)desired high-level properties such as sentiment, truthfulness, or linguistic style~\cite{wang2025adaptive,li2025fairsteer,zhang2025control,turner2023steering}. Beyond simple stylistic control, these techniques have been extended to modulate more complex cognitive behaviors, with studies showing that inference-time interventions can improve performance on reasoning tasks~\cite{hjer2025improving,chen2025seal}. While much of this work focuses on the general behavioral control of LLMs as helpful and harmless chat assistants, we focus on controlling LLMs as agents in competitive and strategic game scenarios where steering vectors change the agent's action distribution.

\section{The Chameleon Game}
The Chameleon\footnote{We use `The Chameleon' to refer to the game and `the chameleon' to refer to the player with the chameleon identity.}~\cite{chameleon} is an $\numplayers$-player \textit{hidden-identity} board game. Without loss of generality, we assume $\numplayers \geq 3$. The game is played between a chameleon and \(\numplayers - 1\) non-chameleons. The goal of the non-chameleons is to identify the chameleon correctly while ensuring the integrity of a common secret jointly known by them. As we further explain later, The Chameleon requires players to strategically reveal, conceal, and infer information. We use The Chameleon as an example to measure how suitable large language models are to environments where they interact with agents of unknown identities and intentions.

\subsection{Rules of The Chameleon and Notation}

We explain the rules of The Chameleon below and provide an example gameplay in Figure \ref{fig:gameplayexample}.

\textbf{\textit{Playing order:}} The players uniformly randomly draw a playing order $\order$. Without loss of generality, we assume that each player's index is the same as its playing order, i.e., $o = (1, \ldots, \numplayers)$.

\textbf{\textit{Identity assignment:}} There is a single \textit{chameleon} uniformly randomly chosen from the players, and there are $\numplayers-1$ \textit{non-chameleon} players. The chameleon is chosen secretly: The chameleon knows its identity, but the non-chameleons do not know who the chameleon is and who the other non-chameleons are. We denote the chameleon's index with $\chameleon$.

\textbf{\textit{Category, words, and secret word:}} The players choose a category $\category$. Each category $\category$ contains $\numwordspercategory$ words denoted with $\words$. Every player knows the category $\category$ and the words $\words$. Without loss of generality, we assume these words are ordered, i.e.,  $\words = \lbrace \word_{1}, \ldots, \word_{\numwordspercategory}\rbrace$.  One of the words is chosen to be the \textit{secret word} $\secretword$ uniformly randomly from $\words$. The secret word is distributed to the non-chameleons secretly: Each non-chameleon knows the secret word, but the chameleon does not know the secret word. 

\textbf{\textit{Gameplay:}} The players play sequentially. At turn $i$, player $i$ says a \textit{response word} $\response_{i}$ that can be chosen from a (potentially infinite) set of words that is denoted by $\responses$. Player $i$ knows $\response_{1}, \ldots, \response_{i-1}$, before saying $\response_{i}$. 

\begin{figure}[t]
  \centering
\includegraphics[width=0.34\textwidth]{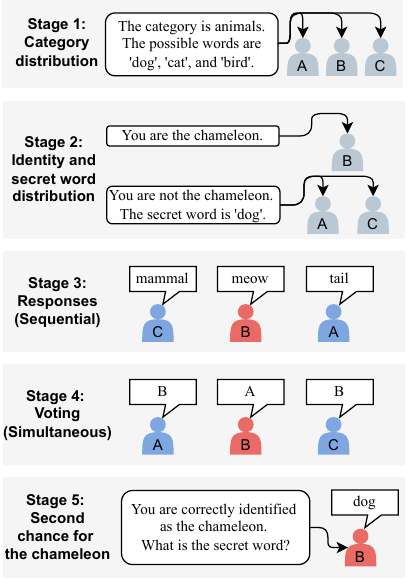}
  \caption{An example gameplay. In this example, the non-chameleons (blue players) correctly identify the chameleon (red player), but the chameleon wins in the second chance.}
  \label{fig:gameplayexample}
\end{figure}

\textbf{\textit{Discussion, voting, and a second chance:}} After all players give responses, including the chameleon, the players first discuss who the chameleon is. After the discussion, they \textit{vote} to identify the chameleon. The player with the highest vote is identified as the chameleon. If there is a tie, a uniformly randomly chosen player identifies a chameleon among the players with the highest votes. 

\begin{remark}
  In this paper, we do not consider the discussion period as a part of the game and make players vote at the same time without seeing each other's votes.  
\end{remark}
\begin{remark}
While the potential response words $\responses$ and possible secret words $\words$ depend on the category, we do not use $C$ in our notation for $\responses$ and $\words$ since the category is known to everyone. 
\end{remark}
If the non-chameleons correctly identify the chameleon, the chameleon is given a second chance. The chameleon has a single response to identify the secret word $\secretword$. The chameleon knows the category $\category$, the potential secret words $\words$, and all spoken words $\response_{1}, \ldots, \response_{\numplayers}$, before responding.

\textbf{\textit{Winning conditions:}} The game is a \textit{binary zero-sum game}. Either (i) the chameleon \textit{wins}, and the non-chameleons \textit{lose}, $(ii)$ or the non-chameleons win and the chameleon loses. 

The chameleon has two chances of winning the game. If the players misidentify the chameleon, the chameleon wins the game. If the players correctly identify the chameleon, the chameleon wins if its response in the second chance is the secret word\footnote{The scoring guide for the official game gives less points to the chameleon if it wins in the second chance. In this paper, for simplicity of analysis, we consider that the game has only two outcomes: win or loss. We also instruct the LLMs to play the game according to win or loss outcomes.}. 

The non-chameleons win the game if and only if the players correctly identify the chameleon, and the chameleon incorrectly guesses the secret word in the second chance.

\subsection{What Makes The Chameleon Challenging?}
The Chameleon is a social deduction game where the players need to reveal, conceal, and infer information. 

The non-chameleon players need to say response words that do not reveal the secret word; otherwise, the chameleon can infer the secret word. At the same time, their response words cannot be too concealing. Otherwise, they cannot recognize each other and correctly identify the chameleon. At one extreme, they can directly say the secret word not to be misidentified, but it may result in the chameleon inferring the secret word. At the other extreme, they can give a null response revealing no information to the chameleon, but it may result in a potential misidentification. In addition to revealing and concealing information during the gameplay, they need to infer who the chameleon is based on the responses. 

The chameleon needs to blend in by saying a response word that aligns with the previous players' responses. On the other hand, if the chameleon's response does not reveal any additional information to the previous players' responses, then the non-chameleon players can identify the chameleon. At one extreme, the chameleon could repeat a previous response to take no risks, but the non-chameleons may suspect the chameleon due to lack of revealing. At the other extreme, the chameleon could double down based on its estimate of the secret word, but the non-chameleons may identify the chameleon if the estimate is wrong. In addition to giving a strategic response, the chameleon needs to infer the secret word to vote for a suspicious non-chameleon and to win the game on the second chance if it gets identified.

\section{Theoretical Analysis for the Non-Chameleon Strategies}
A strategy is a player's decision-making procedure for giving a response and voting. In this section, we consider that the players can design and share strategies before the game and play the strategy during the gameplay. We consider different classes of strategies and give upper and lower bounds on the non-chameleons' win probability. We summarize the theoretical results in Figure \ref{fig:theoryresults}. The proofs for all technical results are given in \supptext{Appendix \ref{sec:app:proofs}}{the appendix}.

 Throughout this section, we assume that players pre-agree on a (possibly history-dependent) non-chameleon strategy $\pi^{\textrm{non}}$ before the game. After the identities are drawn, each non-chameleon player follows the $\pi^{\textrm{non}}$ during the game. The chameleon knows the strategy $\pi^{\textrm{non}}$ of the non-chameleons but lacks the information of the secret word. The chameleon plays strategy $\pi^{\textrm{ch}}$ during the gameplay, and $\pi^{\textrm{ch}}$ is decided knowing $\pi^{\textrm{non}}$. The chameleon's strategy $\pi^{\textrm{ch}}$ is not known by the non-chameleons while they decide on $\pi^{\textrm{non}}$.  We consider that the category $\category$ is fixed as it is known by all players and plays no role in our analysis.

 \begin{remark}
We note that the original game does not have a strategy agreement phase. We base our theoretical analysis on a shared strategy because, without a pre-agreement, the players could play arbitrary strategies. Alternatively, one could assume that the players are rational and they will play the optimal strategy by reasoning about what the others would do. Even if this strategy could be found by the LLMs (as we later show in our experiments, it is not the case), it may not be unique, and the players may face equilibrium selection issues. The shared strategy enables a meaningful theoretical analysis without relying on additional assumptions about how players choose their strategies. In addition, the instances of the same LLM have the same knowledge and reasoning, similar to a shared strategy used in our analysis.     
 \end{remark}

\begin{figure}[t]
  \centering
\includegraphics[width=0.43\textwidth]{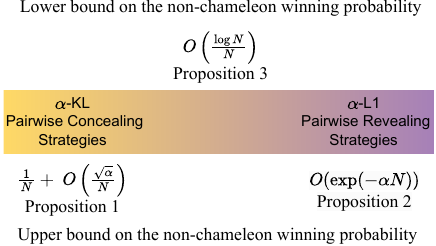}
  \caption{Bounds on the winning probability of non-chameleons. The non-chameleons lose the game with high probability if they use revealing or concealing strategies: The chameleon correctly identifies the secret word for revealing strategies, and the non-chameleons misidentify the chameleon for concealing strategies. The non-chameleons can win the game with a probability that is $\mathcal{O}(\log(\numplayers))$ times than the trivial $0$-KL pairwise concealing strategy.}
  \label{fig:theoryresults}
\end{figure}

\subsection{Converse Results for Revealing and Concealing Stationary Strategies}
\label{sec:converseresults}
We consider two stationary strategies (strategies that do not take the previous responses into account) for the non-chameleon players. Under the stationary strategy conditioned on the secret word $\word$, a non-chameleon player gives a response $r$ drawn from the distribution $\distribution^{\word}$ over $\responses$.

We first consider the following strategy that preserves ambiguity for all secret words $\words$ and potential responses $\responses$. 
\begin{definition}
    Let $KL(\distribution^{\word_{i}} || \distribution^{\word_{j}})$ be the KL divergence between distributions $\distribution^{\word_{i}}$ and $\distribution^{\word_{j}}$. A stationary non-chameleon strategy $\pi^{non}$ is $\alpha$-KL pairwise concealing if 
    \[KL(\distribution^{\word_{i}} || \distribution^{\word_{j}}) \leq \alpha \text{ for all } \word_{i}, \word_{j} \in \words.\]
\end{definition}

A trivial non-chameleon strategy is the stationary $0$-KL pairwise concealing strategy that has the same distribution for all words and votes for player $1$ regardless of the responses. One can easily observe that the winning probability for the non-chameleons is  $\nicefrac{(\numwordspercategory-1)}{\numplayers \numwordspercategory}$ as the first player is the chameleon with probability $\nicefrac{1}{N}$, and the chameleon can guess the secret word with probability $\nicefrac{1}{K}$.

Proposition \ref{prop:negklconcealing} shows that non-chameleon players lose the game with a high probability for $\alpha$-KL pairwise concealing strategies since the non-chameleons cannot statistically distinguish the chameleon at the voting stage. 
\begin{proposition}\label{prop:negklconcealing}
    For every \(\alpha\)-KL pairwise concealing non-chameleon strategy $\pi^{non}$, there exists a chameleon strategy $\pi^{\textrm{ch}}$ such that \[\Pr(\text{Non-chameleons win}) \leq \frac{1}{\numplayers} + \frac{\numplayers-1}{\numplayers^2} \sqrt{ \frac{(\numwordspercategory - 1) \alpha}{\numwordspercategory}}.\]
\end{proposition}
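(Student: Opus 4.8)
The plan is to exhibit a single ``mimicking'' chameleon strategy and argue that it makes the chameleon almost indistinguishable from a non-chameleon at the voting stage, so the non-chameleons can do essentially no better than identify a uniformly random player. Since the non-chameleons win only if they \emph{first} correctly identify the chameleon, it suffices to upper bound $\Pr(\chameleon \text{ identified})$; the stated bound then follows by dropping the additional factor $\Pr(\text{second guess wrong})\le 1$, which also explains why the bound at $\alpha=0$ is $\nicefrac{1}{\numplayers}$ rather than the exact trivial value $\nicefrac{(\numwordspercategory-1)}{\numplayers\numwordspercategory}$. The chameleon strategy $\pi^{\textrm{ch}}$ I would use is: draw a guess $\hat{\word}$ uniformly from $\words$ and, for every decision, play exactly as a non-chameleon would under $\pi^{non}$ but with $\hat{\word}$ substituted for the secret word; in particular it responds with $\response_{\chameleon}\sim\distribution^{\hat{\word}}$.

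First I would set up a symmetric reference experiment $Q$ in which the chameleon is handed the true secret $\secretword$ and runs $\pi^{non}$ with $\secretword$ (i.e.\ $\hat{\word}=\secretword$). In $Q$ all $\numplayers$ players execute the identical, index-symmetric randomized procedure on identical information, so the joint law of the transcript (responses and votes) is exchangeable in the player index, and the plurality-with-uniform-tie-break identification rule is equivariant under relabeling. Since the rule selects exactly one player and these probabilities sum to one, each player, including $\chameleon$, is identified with probability exactly $\nicefrac{1}{\numplayers}$; this produces the leading $\nicefrac{1}{\numplayers}$ term.

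Next I would control the gap between the real game $P$ and the reference $Q$. The two experiments differ only through the chameleon's behavior, generated from $\hat{\word}$ in $P$ and from $\secretword$ in $Q$. Ignoring for the moment the chameleon's own vote (handled below), and writing $g(\response_{\chameleon})$ for the probability that $\chameleon$ is identified given its own response after averaging over the other players' responses and all voting randomness, the response-induced gap equals $\sum_{\response}(\bar{\distribution}-\distribution^{\secretword})(\response)\,g(\response)$, where $\bar{\distribution}=\frac{1}{\numwordspercategory}\sum_{\word\in\words}\distribution^{\word}$ is the chameleon's marginal response law. Bounding this by total variation, applying convexity of total variation to split off the average over $\hat{\word}$, converting each term to KL via Pinsker's inequality, and using concavity of the square root to pull the average inside the root gives a factor
\[\tfrac{1}{\numwordspercategory}\sum_{\word}\sqrt{KL(\distribution^{\secretword}\|\distribution^{\word})}\le\sqrt{\tfrac{1}{\numwordspercategory}\sum_{\word}KL(\distribution^{\secretword}\|\distribution^{\word})}\le\sqrt{\tfrac{(\numwordspercategory-1)\alpha}{\numwordspercategory}},\]
where the final step uses that the diagonal term vanishes and the remaining $\numwordspercategory-1$ terms are each at most $\alpha$ by the $\alpha$-KL pairwise concealing assumption. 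This is exactly the divergence factor appearing in the statement.

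The main obstacle is obtaining the remaining $\frac{\numplayers-1}{\numplayers^{2}}$ damping factor rather than the crude bound in which $g$ contributes its full range and one only recovers the total-variation distance. Two points need care. The chameleon also votes, and in $P$ it votes using $\hat{\word}$ while in $Q$ it votes using $\secretword$, so the perturbation is not confined to a single coordinate; I would handle this by noting that the chameleon never votes for itself, so its own vote never contributes to the votes \emph{received} by $\chameleon$. More importantly, to get the extra $\nicefrac{1}{\numplayers}$-order shrinkage one must use that perturbing a single player's response changes \emph{who} wins the plurality only when $\chameleon$ sits at the boundary of being a plurality leader, an event of probability $\mathcal{O}(\nicefrac{1}{\numplayers})$ under the symmetric law $Q$, while each of the $\numplayers-1$ non-chameleon voters shifts its vote toward $\chameleon$ by at most the total-variation amount. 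Carefully combining the per-voter shift with this boundary probability and tracking the tie-break weighting is the delicate bookkeeping that yields the precise $\frac{\numplayers-1}{\numplayers^{2}}$ coefficient; the remainder of the argument is the robust change-of-measure computation above.
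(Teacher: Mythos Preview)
Your reference-experiment setup and the divergence computation are fine: with the mimicking chameleon, the single-coordinate perturbation between $P$ and $Q$ has total variation at most $\sqrt{(K-1)\alpha/K}$ (you even get an extra factor $\sqrt{1/2}$ you did not use). That cleanly yields
\[
\Pr(\text{chameleon identified})\le \tfrac{1}{N}+d_{TV}(\bar{\distribution},\distribution^{\secretword}),
\]
but this is where your argument stops being rigorous. The ``boundary plus per-voter shift'' heuristic you invoke for the extra $\frac{N-1}{N^{2}}$ damping does not work: all $N-1$ non-chameleon voters observe $r_{c}$, so under the optimal coupling the event $\{r_{c}\text{ differs}\}$ (of probability $\le d_{TV}$) can flip \emph{every} vote to $c$ simultaneously. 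There is no per-voter independence to exploit, and the event ``$c$ is near the plurality boundary in $Q$'' has no bearing on what happens once $r_{c}$ lands in the difference set. An adversarial $\pi^{non}$ can make the identification maximally sensitive to $r_{c}$, so the direct $P$-vs-$Q$ comparison cannot be pushed past $\frac{1}{N}+d_{TV}$.

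The paper obtains the $\frac{N-1}{N^{2}}$ factor by a different, purely algebraic device: it compares two \emph{real-game} hypotheses, $c=i$ versus $c=j$, whose response product measures differ in exactly two coordinates, giving $|\Pr(j\text{ voted}\mid c=i)-\Pr(j\text{ voted}\mid c=j)|\le\delta$ with $\delta=\sqrt{(K-1)\alpha/K}$. Writing $p=\Pr(\text{chameleon voted})=\frac{1}{N}\sum_{i}\Pr(i\text{ voted}\mid c=i)$ and $1-p=\frac{1}{N}\sum_{i}\sum_{j\ne i}\Pr(j\text{ voted}\mid c=i)$, the swap bound gives $1-p\ge (N-1)p-\frac{N-1}{N}\delta$, hence $p\le \frac{1}{N}+\frac{N-1}{N^{2}}\delta$. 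The shrinkage is a consequence of this self-referential identity, not of any sensitivity or boundary estimate; your proof sketch is missing this idea.
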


While $\alpha$-KL pairwise concealing strategies fail at the voting stage, Proposition \ref{prop:negklconcealinghide}, shows that under these strategies, one cannot estimate the secret word with high probability only based on the word responses given by the non-chameleon players.
\begin{proposition} \label{prop:negklconcealinghide}
 Let $\pi^{non}$ be an \(\alpha\)-KL pairwise concealing non-chameleon strategy and $\mathcal{A}$ be an arbitrary (randomized) estimator that (potentially) depends on $\pi^{non}$ and maps $\responses^{\numplayers-1}$ to $\words$. We have \[ \mathbb{E}\left[ \mathcal{A}(\mathbf{r}) = \secretword \right] \leq  \frac{1}{\numwordspercategory} + \frac{\numwordspercategory-1}{\numwordspercategory} \sqrt{(\numplayers - 1) \alpha} \] where $\mathbf{r} = [\response^{i}]_{1\leq i \leq N, i \neq c }$ is the non-chameleons' responses.
\end{proposition}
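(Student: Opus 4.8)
The plan is to bound the success probability of any estimator $\mathcal{A}$ by reducing the problem to the total variation distance between the response distributions induced by different secret words, then converting that total variation bound into the stated KL-based bound via Pinsker's inequality.

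First I would set up the Bayesian estimation problem. The secret word $\secretword$ is drawn uniformly from $\words$, and conditioned on $\secretword = \word$, the non-chameleons' response vector $\mathbf{r} \in \responses^{\numplayers-1}$ is distributed according to the product measure $(\distribution^{\word})^{\otimes(\numplayers-1)}$, since each of the $\numplayers-1$ non-chameleons independently draws its response from $\distribution^{\word}$ under the stationary strategy. Call this product distribution $Q^{\word}$ over $\responses^{\numplayers-1}$. The quantity $\mathbb{E}[\mathcal{A}(\mathbf{r}) = \secretword]$ is exactly the average success probability of a (possibly randomized) multi-hypothesis test distinguishing the $\numwordspercategory$ hypotheses $\lbrace Q^{\word_{1}}, \ldots, Q^{\word_{\numwordspercategory}}\rbrace$ under a uniform prior. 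The optimal such test is the maximum-a-posteriori (Bayes) estimator, so without loss of generality I may upper-bound the success probability of the Bayes-optimal rule, which dominates any $\mathcal{A}$.

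Next I would bound the Bayes success probability. A standard identity (or inequality) for the average error of testing $\numwordspercategory$ equiprobable hypotheses is that the success probability is at most $\frac{1}{\numwordspercategory} + \frac{1}{\numwordspercategory}\sum_{j=2}^{\numwordspercategory} \mathrm{TV}(Q^{\word_{1}}, Q^{\word_{j}})$, or more symmetrically can be controlled by the pairwise total variation distances $\mathrm{TV}(Q^{\word_{i}}, Q^{\word_{j}})$. To match the stated bound I expect to argue that the success probability is at most $\frac{1}{\numwordspercategory} + \frac{\numwordspercategory-1}{\numwordspercategory}\max_{i,j}\mathrm{TV}(Q^{\word_{i}}, Q^{\word_{j}})$. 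I would then relate the product-measure total variation to the single-response KL divergence: by Pinsker's inequality $\mathrm{TV}(Q^{\word_{i}}, Q^{\word_{j}}) \leq \sqrt{\tfrac{1}{2} KL(Q^{\word_{i}} \,||\, Q^{\word_{j}})}$, and by tensorization of KL divergence over the $\numplayers-1$ independent coordinates, $KL(Q^{\word_{i}} \,||\, Q^{\word_{j}}) = (\numplayers-1)\, KL(\distribution^{\word_{i}} \,||\, \distribution^{\word_{j}}) \leq (\numplayers-1)\alpha$ by the $\alpha$-KL pairwise concealing assumption. Combining these gives $\mathrm{TV}(Q^{\word_{i}}, Q^{\word_{j}}) \leq \sqrt{(\numplayers-1)\alpha/2}$, and substituting into the multi-hypothesis bound yields the claimed inequality (possibly with a constant I would need to chase: the factor $\sqrt{(\numplayers-1)\alpha}$ without the $1/2$ suggests the authors use a slightly looser constant or a different pairwise-testing lemma, so I would adjust accordingly).

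The main obstacle I anticipate is getting the constants exactly right in the multi-hypothesis testing step. The clean two-hypothesis bound $\Pr(\text{success}) \leq \tfrac{1}{2} + \tfrac{1}{2}\mathrm{TV}$ generalizes to $\numwordspercategory$ hypotheses in more than one way, and the precise coefficient $\frac{\numwordspercategory-1}{\numwordspercategory}$ in front of the square root must come from bounding the excess success probability (beyond the $\frac{1}{\numwordspercategory}$ chance level) by the worst-case pairwise total variation, summed or maximized appropriately across the $\numwordspercategory-1$ competing hypotheses. I would carefully verify which aggregation (max versus average over pairs) produces exactly the stated $\frac{\numwordspercategory-1}{\numwordspercategory}\sqrt{(\numplayers-1)\alpha}$ form, and confirm that dropping the $\tfrac{1}{2}$ inside the radical is an intentional relaxation rather than an error in my reconstruction. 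The tensorization and Pinsker steps are routine and I do not expect difficulty there.
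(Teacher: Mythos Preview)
Your proposal is correct and follows the same route as the paper: tensorize the KL divergence over the $\numplayers-1$ independent non-chameleon responses, convert to total variation via Pinsker, and feed the resulting pairwise TV bound into a multi-hypothesis testing inequality to get the $\tfrac{1}{\numwordspercategory}+\tfrac{\numwordspercategory-1}{\numwordspercategory}\cdot(\text{TV term})$ form. The only cosmetic differences are that the paper derives the multi-hypothesis bound explicitly via an add-and-subtract identity (writing $\mathcal{A}(\boldsymbol{\distribution}^{\word^{j}})_{\word^{i}} = \mathcal{A}(\boldsymbol{\distribution}^{\word^{i}})_{\word^{i}} - (\mathcal{A}(\boldsymbol{\distribution}^{\word^{i}})_{\word^{i}} - \mathcal{A}(\boldsymbol{\distribution}^{\word^{j}})_{\word^{i}})$ and summing) rather than citing it as standard, and it applies Pinsker in the form $d_{TV}\le\sqrt{KL}$ without the $\tfrac{1}{2}$, which is exactly the looser constant you flagged.
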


We remark that the estimator in the proposition can depend on $\pi^{non}$, but only utilizes the word responses. Additionally utilizing the voting responses given by the non-chameleons would increase the probability given in Proposition \ref{prop:negklconcealinghide}.

We next consider revealing strategies.
\begin{definition}
    Let $L1(\distribution^{w_{i}}, \distribution^{w_{j}})$ be the $L1$-distance between distributions $\distribution^{\word_{i}}$ and $\distribution^{\word_{j}}$. A stationary non-chameleon strategy $\pi^{non}$ is $\alpha$-L1 pairwise revealing if 
    \begin{equation*} L1(\distribution^{w_{i}}, \distribution^{w_{j}})
\geq \alpha  \text{ for all } i\neq j \in [\numwordspercategory].
    \end{equation*}
\end{definition}

An L1 pairwise revealing strategy means that for every pair of potential secret words, there exists a grouping of responses that distinguishes the secret word with high probability. For these types of non-chameleon strategies, the chameleon can compare the potential secret words pairwise based on the grouping and output the word that closely matches the empirical distribution of the grouped responses. We show that non-chameleons lose the game with high probability, as these strategies lead to a correct guess of the secret word, and the chameleon wins the game in the second round even if it gets voted. 

\begin{proposition} \label{prop:negl1revealing}

Let $R_{\word_{i} > \word_{j}}$ be the set of words for which $\distribution^{\word_{i}}$ is more likely than $\distribution^{\word_{i}}$. $R_{\word_{i} < \word_{j}}$ and $R_{\word_{i} = \word_{j}}$ are defined analogously. Let $p^{i}_{w^{i} >  w^{j}}$, $p^{i}_{w^{i} =  w^{j}}$, and $p^{i}_{w^{i} <  w^{j} }$ be the probability of sets $R_{\word_{i} > \word_{j}}$, $R_{\word_{i} = \word_{j}}$, and $R_{\word_{i} < \word_{j}}$, respectively, under $\distribution^{\word_{i}}$.
    Define 
    \begin{equation*}
        \resizebox{\linewidth}{!}{
        $\theta(w^{i}, w^{j}) = \max_{A \subseteq \lbrace p^{i}_{w^{i} >  w^{j}}, p^{i}_{w^{i} =  w^{j}}, p^{i}_{w^{i} <  w^{j} }\rbrace } \min \left(\sum_{a\in A} a, 1-\sum_{a\in A} a\right)$,
        }
    \end{equation*}
    \(\varphi(w^{i}, w^{j}) = \frac{1}{1 - 2\theta(w^{i}, w^{j})} \log \frac{1 -  \theta(w^{i}, w^{j})}{\theta(w^{i}, w^{j})},\) and     $\varphi= \min_{w' \in \words} \varphi(w^{*}, w')$.    
    
    For every \(\alpha\)-L1 pairwise revealing non-chameleon strategy $\pi^{non}$, there exists a chameleon $\pi^{\textrm{ch}}$ strategy such that \[\Pr(\text{Non-chameleons win}) \leq 6(\numwordspercategory-1) \exp\left(-\frac{(\numplayers - 1) \alpha \varphi}{2} \right).\]
\end{proposition}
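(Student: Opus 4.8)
The plan is to reduce the statement to a secret-word estimation problem, then bound the estimation error with a pairwise-tournament test whose per-pair error is controlled by a Chernoff-type large-deviation bound. First I would observe that the non-chameleons win only if the chameleon is identified \emph{and} its second-chance guess is wrong, so that
\[\Pr(\text{Non-chameleons win}) \le \Pr(\hat{\word} \neq \secretword),\]
where $\hat{\word}$ is whatever word the chameleon submits in the second chance. Because the chameleon knows its own response, the remaining $\numplayers-1$ responses it observes are i.i.d.\ draws from $\distribution^{\secretword}$, and since $\pi^{\textrm{ch}}$ is chosen knowing $\pi^{non}$ the chameleon knows all the distributions $\distribution^{\word}$. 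Hence it suffices to exhibit an estimator $\hat{\word}(\mathbf{r})$ of $\secretword$ from these $\numplayers-1$ i.i.d.\ samples whose error probability is at most the stated bound, and let $\pi^{\textrm{ch}}$ play it.

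Second, I would build $\hat{\word}$ as a pairwise tournament. For each unordered pair $\{\word_{i},\word_{j}\}$ I fix a binary test $T_{ij}(\mathbf{r})\in\{\word_{i},\word_{j}\}$ that partitions $\responses$ according to the sign regions $R_{\word_{i}>\word_{j}}$, $R_{\word_{i}=\word_{j}}$, $R_{\word_{i}<\word_{j}}$ and decides by thresholding the number of responses falling in the grouping that attains $\theta(\word_{i},\word_{j})$. The estimator outputs the unique word that beats all others (the Condorcet winner), and an arbitrary word otherwise. Since at most one word can beat all the rest, the event $\{\hat{\word}\neq\secretword\}$ is contained in $\bigcup_{\word'\neq\secretword}\{T_{\secretword\word'}\text{ selects }\word'\}$, so a union bound gives
\[\Pr(\hat{\word}\neq\secretword)\le\sum_{\word'\neq\secretword}\Pr\nolimits_{\secretword}\!\left(T_{\secretword\word'}\text{ selects }\word'\right).\]
Bounding each term by $6\exp\!\left(-\tfrac{(\numplayers-1)\alpha\varphi(\secretword,\word')}{2}\right)$ and using $\varphi=\min_{\word'}\varphi(\secretword,\word')$ across the $\numwordspercategory-1$ opponents of $\secretword$ yields exactly the claimed prefactor $6(\numwordspercategory-1)$.

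The crux is the per-pair bound. Here I would use that an $\alpha$-L1 pairwise revealing strategy places total-variation mass $\tfrac{1}{2}L1(\distribution^{\word_{i}},\distribution^{\word_{j}})\ge\alpha/2$ on the discriminating regions $R_{\word_{i}>\word_{j}}$ and $R_{\word_{i}<\word_{j}}$, while the balanced grouping has probability $\theta(\secretword,\word')$ under $\distribution^{\secretword}$. Writing the test statistic as a sum of i.i.d.\ bounded variables and applying a Chernoff/Hoeffding large-deviation bound at the threshold determined by $\theta$, the one-sided error exponent should resolve to $\tfrac{\alpha}{2}\varphi(\secretword,\word')$ with $\varphi(\secretword,\word')=\tfrac{1}{1-2\theta}\log\tfrac{1-\theta}{\theta}$; the constant $6$ absorbs the treatment of ties and of the (few) candidate groupings over which $\theta$ is maximized. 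I expect this step---pinning down the grouping and threshold so that the large-deviation rate is precisely $\varphi$, and verifying that the L1 separation $\alpha$ converts into this rate---to be the main obstacle, whereas the reduction to estimation and the tournament union bound are routine.
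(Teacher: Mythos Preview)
Your reduction to second-chance estimation from $N-1$ i.i.d.\ samples, the pairwise tournament, and the union bound over the $K-1$ competitors of $\secretword$ are exactly what the paper does. Where you diverge is in the per-pair step. The paper does \emph{not} run a single threshold test on one grouping; instead it collapses every non-chameleon response into one of the three sign bins $R_{\word_i>\word_j}$, $R_{\word_i=\word_j}$, $R_{\word_i<\word_j}$, forms the full empirical $3$-vector $(\hat p_{>},\hat p_{=},\hat p_{<})$, and tests whether this $3$-vector lies within $\alpha/2$ in $L_1$ of the theoretical $3$-vector for $\word_i$ (the $\alpha$-$L1$ revealing property plus the triangle inequality then guarantees uniqueness of the winner). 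For the concentration step the paper simply cites Theorem~2.1 of Weissman et al.\ (2003), an $L_1$ large-deviation inequality for empirical distributions whose statement \emph{already} contains the balance parameter $\theta$ and the function $\varphi(\theta)=\frac{1}{1-2\theta}\log\frac{1-\theta}{\theta}$, and whose prefactor for an alphabet of size $a=3$ is $2^{3}-2=6$. So the constant $6$ is not an artifact of tie-handling or of enumerating candidate groupings, as you surmise---it is literally the $2^{a}-2$ in the Weissman bound. Your Chernoff/Hoeffding route on a single thresholded count is not wrong in spirit (Weissman's own proof reduces the $L_1$ event to a union over all binary partitions of the alphabet and applies the Bernoulli Chernoff rate to each), but a \emph{single} threshold test will not by itself deliver the exact exponent and constant; to match the statement you would have to take the union over all three binary partitions of the three bins (giving the $6$) and use the sharp Bernoulli rate rather than Hoeffding (giving $\varphi$)---i.e., re-derive the Weissman inequality. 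Invoking it directly, as the paper does, is the shorter path.
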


For $\alpha \geq 1$, the bound simplifies to \[ \Pr(\text{Non-chameleons win}) \leq 6(\numwordspercategory-1) \left(\frac{2 - \alpha}{\alpha}\right)^{\frac{(N-1) \alpha}{2 \alpha -2} }.\]

A $2$-L1 pairwise revealing strategy has unique responses for each secret word. Under such a strategy, the chameleon can infer the secret word with probability $1$ after any response from the non-chameleons. However, it gets correctly identified with probability at least $\nicefrac{(K-1)}{N K}$ since it responds first with probability $\nicefrac{1}{N}$ and the response of the chameleon corresponds to the correct word with probability at most $\nicefrac{1}{K}$.

\begin{table*}[t]
    \centering
    \caption{Numerical results from one hundred games of The Chameleon with players using nominal LLMs.}
    \vspace{-0.2cm}
    \ra{1.08}
    \begin{tabular}{lp{2.1cm}p{1.8cm}p{2cm}|p{2.1cm}p{1.8cm}p{2cm}}
    \toprule
        \multicolumn{1}{c}{\multirow{3}{*}[-2pt]{Non-chameleon LLM}}
        & \multicolumn{3}{c}{\textsc{Chameleon: Identical LLM}}
        &\multicolumn{3}{c}{\textsc{Chameleon: GPT-5}}\\
    \cmidrule{2-4}\cmidrule{5-7}
        &\centering Non-chameleon win ratio 
        &\centering Identification ratio
        &\centering Second-chance win ratio
        &\centering Non-chameleon win ratio 
        &\centering Identification ratio
        &\centering\arraybackslash{Second-chance win ratio}\\
    \midrule
        GPT-5
            &\centering 0.00
            &\centering 0.64
            &\centering\arraybackslash{1.00}
            &\centering 0.00
            &\centering 0.64
            &\centering\arraybackslash{1.00}
        \\
        GPT-4.1
            &\centering 0.02
            &\centering 0.58
            &\centering\arraybackslash{0.96}
            &\centering 0.03
            &\centering 0.45
            &\centering\arraybackslash{0.93}
        \\
        GPT-4o
            &\centering 0.06
            &\centering 0.66
            &\centering\arraybackslash{0.91}
            &\centering 0.01
            &\centering 0.43
            &\centering\arraybackslash{0.98}
        \\
        Gemini 2.5 Pro
            &\centering 0.04
            &\centering 0.57
            &\centering\arraybackslash{0.92}
            &\centering 0.06
            &\centering 0.35
            &\centering\arraybackslash{0.83}
        \\
        Llama 3.1 70B
            &\centering 0.06
            &\centering 0.44
            &\centering\arraybackslash{0.86}
            &\centering 0.05
            &\centering 0.29
            &\centering\arraybackslash{0.82}
        \\
        Qwen3 32B
            &\centering 0.11
            &\centering 0.38
            &\centering\arraybackslash{0.71}
            &\centering 0.02
            &\centering 0.23
            &\centering\arraybackslash{0.91}
        \\
    \bottomrule
    \end{tabular}
    \label{tab:matchups}
\end{table*}

\subsection{Achievability Result for a History-Dependent Strategy}

 In this section, we describe a history-dependent non-chameleon strategy that eliminates some words in each turn and maintains uniform ambiguity on the remaining words. This strategy improves the winning probability of the $0$-KL pairwise concealing strategy by a factor of \(\log (\numplayers)\).

The players create the set $\responses=2^{\words} \cup \{null\}$ of responses. Each response corresponds to a subset of the potential words. For simplicity, we assume $K = l (\numplayers + 1)$ where $l$ is a positive integer.
Let $r_{0} = \words$. We define the strategy $\pi^{amb}$ as follows:
\begin{itemize}
\setlength\itemsep{0em} 
    \item At turn $i$, if $\secretword \not\in r_{i - 1}$ or $|r_{i - 1}| \not = l(\numplayers+2-i)$, respond with $null$. If $\secretword \in r_{i - 1}$, respond with $r_{i}$ such that $r_{i}$ is uniformly randomly sampled from $\lbrace r | \secretword \in r_{i}, |r_{i}| = l (\numplayers + 1-i)\rbrace$. 
    \item At voting, if $\secretword \in r_{i}$ and $|r_{i} |  = l (N+1-i) $ for all $i$, vote for player $1$. Otherwise, vote for player $i$ such that $\secretword \not \in r_{i}$ or $|r_{i} |  \not = l (N+1-i) $ for the smallest $i$.
\end{itemize}

\begin{figure}[t]
  \centering
\includegraphics[width=0.43\textwidth]{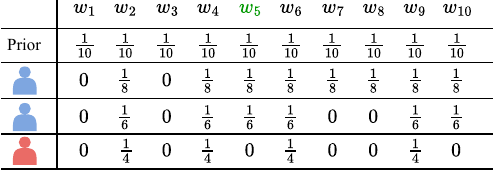}
  \caption{An example gameplay under $\pi^{amb}$ including the chameleon's response. The table shows the posterior probabilities of potential secret words given the responses under $\pi^{amb}$ without the knowledge of the secret word. The chameleon (red player) gives a response that eliminates the secret word $w_{5}$ and is not consistent with $\pi^{amb}$. Consequently, the non-chameleons (blue players) certainly identify the chameleon. The chameleon has a chance of winning the game with probability $1/2$ as it knows that the secret word is a word that it eliminated, $w_{5}$ or $w_{10}$.}
  \label{fig:lowerboundstrategy}
\end{figure}
The response generated by the strategy $\pi^{amb}$ eliminates a constant fraction $\nicefrac{1}{(\numplayers + 1)}$ of words at every turn. The remaining words contain the secret word, and each of the words has the same probability of being the secret word. Note that at turn $i$, the chameleon also has to give a response from $\lbrace r | \secretword \in r_{i}, |r_{i}| = l (\numplayers + 1 -i) \rbrace$. Since all non-chameleons initially give responses containing the secret word, the first player to respond that does not contain the secret word is the chameleon. In this case, the next non-chameleons give a null response to preserve the ambiguity on the remaining words, and the players vote for the true chameleon. If all players, including the chameleon, give responses that contain the secret word, the players vote for player 1 since all players have the same $\nicefrac{1}{\numplayers}$ probability of being the chameleon after the responses are given. An example gameplay under $\pi^{amb}$ is shown in Figure \ref{fig:lowerboundstrategy}. The non-chameleon strategy $\pi^{amb}$ wins the game with probability $\mathcal{O}(\nicefrac{\log(N)}{N})$.

\begin{proposition} \label{prop:poseliminate}
    For every chameleon strategy $\pi^{ch}$, the non-chameleon strategy $\pi^{amb}$ satisfies
    \[\Pr(\text{Non-chameleons win}) \geq \frac{\log(\numplayers + 1) + 0.4}{\numplayers} \frac{l - 1}{l}.\]
\end{proposition}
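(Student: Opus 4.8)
The plan is to analyze the game under $\pi^{amb}$ by conditioning on which player is the chameleon and then lower-bounding the non-chameleons' winning probability in each case. First I would establish the structure induced by $\pi^{amb}$: since every non-chameleon gives a response whose word-set contains $\secretword$ and halves the candidate set by exactly a $\nicefrac{1}{(\numplayers+1)}$ fraction, the honest responses are perfectly consistent with the declared set-size schedule $|r_i| = l(\numplayers+1-i)$, and after turn $i$ the posterior over the remaining $l(\numplayers+1-i)$ words is uniform. The key observation is that the chameleon, not knowing $\secretword$, must at its turn $\chameleon$ output a set of the mandated size $l(\numplayers+1-\chameleon)$; whatever set it picks either contains $\secretword$ or not. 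If it ever deviates from the size schedule or drops $\secretword$, the voting rule pins it down exactly (the subsequent non-chameleons switch to $null$, preserving ambiguity), so the chameleon is identified with certainty in that branch.

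Next I would split on the two regimes of the chameleon's behavior. In the branch where the chameleon's response fails to contain $\secretword$ (which, crucially, is the only way it can win — by retaining secret-word knowledge for the second chance), the non-chameleons identify it with probability $1$, so the game is decided entirely by the second stage. I would then compute the chameleon's second-chance success probability: conditioned on being identified this way, the chameleon knows $\secretword$ lies in the set of words it eliminated at its turn, which has size $\nicefrac{|r_{\chameleon-1}|}{(\numplayers+1)}\cdot(\text{something})$ — more precisely the eliminated block has size $l$, so the chameleon guesses correctly with probability $\nicefrac{1}{(\text{block size})}$. The non-chameleons win exactly when the chameleon guesses wrong, contributing a factor like $\nicefrac{(\text{block size}-1)}{(\text{block size})}$. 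In the complementary branch where the chameleon's response does contain $\secretword$ and matches the schedule, all $\numplayers$ players look identical (posterior $\nicefrac{1}{\numplayers}$ each of being the chameleon), so voting for player $1$ succeeds only when player $1$ is actually the chameleon, i.e. with probability $\nicefrac{1}{\numplayers}$, and in that sub-case the chameleon is caught having still not revealed enough, again reducing to a second-chance computation.

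I would then aggregate these contributions by summing over the chameleon's turn position $\chameleon \in \lbrace 1,\ldots,\numplayers\rbrace$ (each equally likely) and over the chameleon's optimal choice of whether to conceal or reveal $\secretword$ at each turn. Because the chameleon is adversarial and knows $\pi^{non}$, I must take the infimum over all $\pi^{ch}$, so the bound should hold for the chameleon's best response; I would argue that the chameleon's optimal trade-off still leaves the non-chameleons with winning probability at least the stated quantity. The arithmetic that produces the $\log(\numplayers+1)$ factor is the heart of the result: summing the per-turn winning contributions, each scaling like $\nicefrac{1}{(\text{remaining set size})}$ or $\nicefrac{1}{(\numplayers+1-i)}$, yields a harmonic sum $\sum_{i} \nicefrac{1}{(\numplayers+1-i)} \approx \log(\numplayers+1)$, and the constant $0.4$ comes from the standard lower bound $H_n \geq \log(n) + \gamma - o(1)$ with $\gamma \approx 0.577$ (or an explicit integral comparison $\sum_{k=1}^{n}\nicefrac{1}{k} \geq \log(n+1)$ plus a correction). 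The $\nicefrac{(l-1)}{l}$ factor tracks the chameleon's failure probability on the second-chance guess within the eliminated block of size $l$.

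The main obstacle I anticipate is the adversarial optimization over $\pi^{ch}$: I must verify that no chameleon strategy — whether it tries to blend in perfectly, partially reveal, or sacrifice the voting stage to secure a favorable second-chance guess — can drive the non-chameleons' win probability below the claimed harmonic-sum bound. This requires carefully showing that the two failure modes for the non-chameleons (misidentification versus a correct second-chance guess) are in tension for the chameleon, so that minimizing one inflates the other, and that the envelope of this trade-off is exactly captured by the harmonic sum. Establishing the tightness of the integral/harmonic estimate to extract the clean $\nicefrac{(\log(\numplayers+1)+0.4)}{\numplayers}$ constant, rather than a looser $\Theta(\nicefrac{\log \numplayers}{\numplayers})$ bound, will be the most delicate bookkeeping.
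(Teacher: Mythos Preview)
Your overall architecture matches the paper's: condition on the chameleon's position $\chameleon$, split on whether $\secretword \in r_\chameleon$, compute each branch's contribution to the non-chameleons' win probability, and sum to get a harmonic series $\sum_{k=2}^{\numplayers+1}\nicefrac{1}{k}$ which produces the $\log(\numplayers+1)+0.4$ constant via $H_m \geq \log m + \gamma$ and $\numplayers\geq 3$.

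However, two pieces of your reasoning are muddled and would cost you unnecessary effort. First, you write about ``the chameleon's optimal choice of whether to conceal or reveal $\secretword$'' and flag the adversarial optimization over $\pi^{ch}$ as ``the main obstacle.'' But the chameleon does not know $\secretword$, so it cannot choose whether to drop it. The point of the uniform posterior you already established is precisely that \emph{every} admissible subset of size $l(\numplayers+1-\chameleon)$ drops $\secretword$ with the same probability $\nicefrac{l}{l(\numplayers+2-\chameleon)}=\nicefrac{1}{(\numplayers+2-\chameleon)}$. Hence all chameleon strategies that obey the size schedule yield the same win probability for the non-chameleons, and any deviation from the schedule only helps them (the chameleon is identified with certainty and has at least as much residual ambiguity to guess through). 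The minimax you anticipate is vacuous; this is exactly what the paper exploits to get a one-line computation per branch.

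Second, your parenthetical ``which, crucially, is the only way it can win --- by retaining secret-word knowledge for the second chance'' attached to the branch $\secretword\notin r_\chameleon$ is backwards. In the complementary branch $\secretword\in r_\chameleon$ with $\chameleon\neq 1$, the chameleon wins \emph{outright} because everyone votes player $1$; that is the chameleon's main winning mode, not the second-chance guess. The two branches that contribute to the non-chameleons' win are (i) $\secretword\in r_\chameleon$ and $\chameleon=1$, and (ii) $\secretword\notin r_\chameleon$ for arbitrary $\chameleon$, each followed by a correct identification and a failed guess among $l$ equiprobable candidates. Once you correct this bookkeeping, the sum falls out immediately with no envelope or trade-off analysis needed.
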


\section{The Chameleon Experiments with LLM Agents and Discussion} \label{sec:experiments}

\subsection{Experiment Setting}
For the experiments, we use the original categories and secret words from The Chameleon, combined with additional categories and secret words generated by prompting GPT-4o (details are provided in \supptext{Appendix~\ref{sec:app:synthetic_game_cards}}{the appendix}). There are $20$ original and $54$ LLM-generated categories, and each category has $16$ potential secret words. 
Unless stated otherwise, the number $\numplayers$ of players is $4$.

We make different types of LLMs match and play the game. All non-chameleons are the same type of LLM agent, and the chameleon is a (potentially) different type of LLM agent. We use four closed-source LLMs, GPT-5~\cite{openai_gpt5}, GPT-4.1~\cite{openai_gpt41}, GPT-4o~\cite{openai_gpt4_o}, Gemini 2.5 Pro~\cite{gemini}, and two open-weight LLMs, Llama 3.1 70B~\cite{llama3}, and Qwen3 32B~\cite{qwen3}. We make the LLMs play the game $100$ times with randomly drawn categories, secret words, and response order. For each game, we instantiate a new conversation for each player.

We instruct each LLM agent on the rules of the game through text and feed responses of the agents to each other after parsing and formatting. An example conversation history and prompts are given in \supptext{Appendix \ref{sec:app:additional_experimental_details}}{the appendix}. We give the code in \url{https://github.com/mustafakarabag/LLMChameleon}. 

We present experiments with the nominal models, nominal models with additional instructions, nominal models with changed temperature, and steered models. Additional experiments using older models and inference of the secret word using a non-LLM-based method can be found in \supptext{Appendix \ref{sec:app:additional_experiments}}{the appendix}.

\subsection{Results with Nominal Models}

We report the results for the games with nominal models in Table \ref{tab:matchups}. Second-chance win ratio refers to the ratio of games where the chameleon was identified, but it won the game by correctly guessing the secret word.

\textbf{\textit{Identification ratios of the chameleon:}} 
Non-chameleon LLM agents correctly identify the chameleon LLM agent with ratios that are significantly higher than the $25\%$ ratio of the $0$-KL pairwise concealing strategies. As non-chameleons, GPT-5 agents have the best identification rate, while Qwen3 and Llama 3.1 agents have the worst identification rates.

\textbf{\textit{Win ratios of the non-chameleons:}} 
Non-chameleon LLM agents win the game with less than $6\%$ ratio (excluding the games with Qwen3 non-chameleons against Qwen3 chameleons). We note that this ratio is significantly lower than the trivially achievable $23\%$ ratio of the $0$-KL pairwise concealing strategies that do not utilize the secret word information. We remark that the low winning ratios are a result of the chameleon correctly guessing the secret word in the second chance: The chameleons correctly guess the secret word with more than $82\%$ ratio on average after being identified.

\textbf{\textit{Inference of the chameleon and the secret word:}} We observe that the non-chameleon LLM agents can infer who the chameleon is with a non-trivial probability. On the flip side, the chameleon LLM agents can infer the secret word from the non-chameleons' responses. The improved inference capabilities of LLMs (GPT-5 vs. GPT-4.1, GPT-5 vs. GPT-4o) are prominent when considering the second-chance win ratio. After being identified, chameleon GPT-5 agents can infer the secret word with a significantly higher ratio than other chameleon GPT agents. 

\begin{table}[t]
    \centering
    \caption{Numerical results from one hundred games of The Chameleon with a varying number of Llama 3.1 70B players.}
    \vspace{-0.2cm}
    \ra{1.05}
    \resizebox{\linewidth}{!}{
    \begin{tabular}{p{1.6cm}p{2.1cm}p{1.8cm}p{2cm}}
    \toprule
        \centering Number of players
        &\centering Non-chameleon win ratio
        &\centering Identification ratio
        &\centering\arraybackslash{Second-chance win ratio}\\
    \midrule
        \centering 3
            &\centering 0.14
            &\centering 0.68
            &\centering\arraybackslash{0.79}
        \\
        \centering 4
            &\centering 0.06
            &\centering 0.44
            &\centering\arraybackslash{0.86}
        \\
        \centering 6
            &\centering 0.03
            &\centering 0.40 
            &\centering\arraybackslash{0.92}
        \\
                \centering 8
            &\centering 0.03
            &\centering 0.27
            &\centering\arraybackslash{0.89}
        \\
    \bottomrule
    \end{tabular}
    }
    \label{tab:matchups_by_number_of_players}
\end{table}

\begin{table*}[t]
    \centering
    \caption{Numerical results from one hundred games of The Chameleon with player using LLMs with modifications.}
    \vspace{-0.2cm}
    \ra{1.08}
    \begin{tabular}{lp{2.1cm}p{1.8cm}p{2cm}|p{2.1cm}p{1.8cm}p{2cm}}
    \toprule
        \multicolumn{1}{c}{\multirow{3}{*}[-2pt]{Non-chameleon LLM}}
        & \multicolumn{3}{c}{\textsc{Chameleon: Identical LLM}}
        &\multicolumn{3}{c}{\textsc{Chameleon: GPT-5}}\\
    \cmidrule{2-4}\cmidrule{5-7}
        &\centering Non-chameleon win ratio 
        &\centering Identification ratio
        &\centering Second-chance win ratio
        &\centering Non-chameleon win ratio 
        &\centering Identification ratio
        &\centering\arraybackslash{Second-chance win ratio}\\
    \midrule
        Llama 3.1 70B
            &\centering 0.06
            &\centering 0.44
            &\centering\arraybackslash{0.86}
            &\centering 0.05
            &\centering 0.29
            &\centering\arraybackslash{0.82}
        \\
        \hspace{2pt} + temperature=1
            &\centering 0.04
            &\centering 0.33
            &\centering\arraybackslash{0.88}
            &\centering 0.00
            &\centering 0.21
            &\centering\arraybackslash{1.00}
        \\
        \hspace{2pt} + instructions
            &\centering 0.07
            &\centering 0.22
            &\centering\arraybackslash{0.68}
            &\centering 0.03
            &\centering 0.19
            &\centering\arraybackslash{0.84}
        \\
        \hspace{2pt} + steering
            &\centering 0.11
            &\centering 0.21
            &\centering\arraybackslash{0.48}
            &\centering 0.07
            &\centering 0.20
            &\centering\arraybackslash{0.63}
        \\
        Qwen3 32B
            &\centering 0.11
            &\centering 0.38
            &\centering\arraybackslash{0.71}
            &\centering 0.02
            &\centering 0.23
            &\centering\arraybackslash{0.91}
        \\
        \hspace{2pt} + temperature=1
            &\centering 0.10
            &\centering 0.36
            &\centering\arraybackslash{0.71}
            &\centering 0.04
            &\centering 0.26
            &\centering\arraybackslash{0.84}
        \\
        \hspace{2pt} + instructions
            &\centering 0.13
            &\centering 0.40
            &\centering\arraybackslash{0.68}
            &\centering 0.02
            &\centering 0.24
            &\centering\arraybackslash{0.91}
        \\
        \hspace{2pt} + steering
            &\centering 0.18
            &\centering 0.41
            &\centering\arraybackslash{0.56}
            &\centering 0.03
            &\centering 0.25
            &\centering\arraybackslash{0.88}
        \\
    \bottomrule
    \end{tabular}
    \label{tab:matchupsmod}
\end{table*}

\textbf{\textit{Concealment of the secret word and the chameleon identity:}}
Our theoretical analyses of concealing and revealing strategies for the non-chameleons in Section \ref{sec:converseresults} show that extremes of both strategies lose the game: The revealing strategies make the non-chameleons lose because the chameleon can infer the secret word with high probability. The concealing strategies make the non-chameleons lose because they fail to identify the chameleon. 

Combining the theoretical results with the empirical results, we deduce that non-chameleon LLM agents fail to conceal the secret word and reveal excessive information to the agents of unknown identities, making the chameleon infer the secret word easily. We note that while the non-chameleon LLM agents have the informational advantage of jointly knowing a secret word, they could achieve a higher win ratio of $23\%\ (\nicefrac{1}{4} \times \nicefrac{15}{16})$ if they were to ignore this informational advantage and play with a trivial $0$-KL pairwise concealing strategy, i.e., do not say anything informative.

Since non-chameleon LLMs use revealing strategies, it is difficult to analyze whether chameleon LLMs can blend in without being noticed. However, we note that when playing as the chameleon against another model's non-chameleons, GPT-5 has a lower identification rate than the chameleon of that model, indicating that GPT-5 can blend in better than other models as the chameleon.

\textbf{\textit{Win rates as function of the number of players:}} To demonstrate the effects of the number of players, we conducted games with $\numplayers = 3,4,6,$ and $8$ players. While the win ratios are lower than the trivially achievable levels for each value of $N$, the win and identification ratios consistently decrease as the number of players increases. This trend aligns with the theoretical results: A lower number of players yields a higher win ratio for the non-chameleons, as shown by Propositions \ref{prop:negklconcealing}, \ref{prop:negl1revealing}, and \ref{prop:poseliminate}.

\begin{figure}[t]
  \centering
\includegraphics[width=0.8\linewidth]{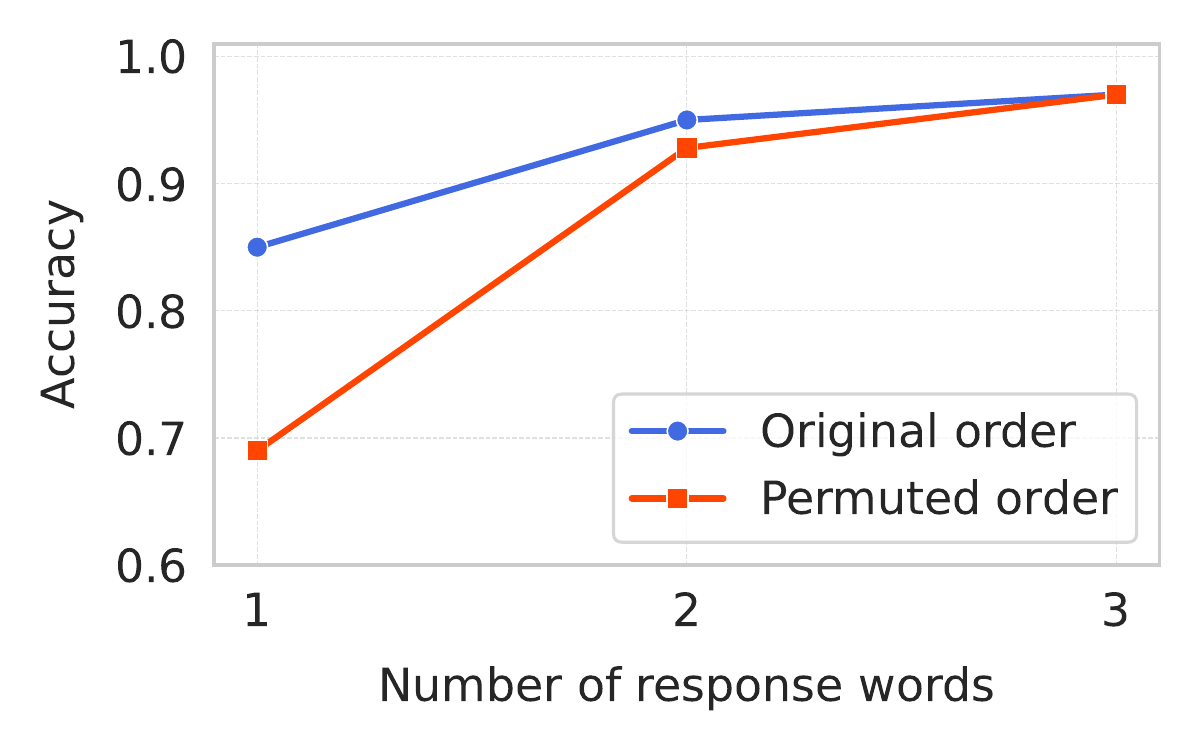}
  \caption{Accuracy of the GPT-4.1 chameleon in guessing the secret word based on the response words of GPT-4.1 non-chameleons. With the permuted order, the chameleon was presented with response words that did not match the original sequential responses of non-chameleons (\textit{e.g.}, one response word from the third instead of the first player). Evaluation done with response words from 100 games. 
}
  \label{fig:evaluator_accuracy_original_vs_permuted}
\end{figure}

\textbf{\textit{Accuracy of guessing the secret word as a function of the number and order of response words:}}
We further evaluate how individual non-chameleon LLMs contribute to the complete information revealed to the chameleon. For this, we ask the chameleon LLM to guess the secret word with only a subset of the original response words.

 As we can see in Figure \ref{fig:evaluator_accuracy_original_vs_permuted} (\textit{Original order}), the response word of the first non-chameleon already allows the chameleon to guess the secret with approximately 85\% probability, and all responses allow approximately 95\%. This evaluation further demonstrates the revealing nature of non-chameleon LLMs, as well as the inference capabilities of LLMs acting in the role of a chameleon. 

We observe that interestingly, the later responses from other non-chameleons reveal significantly less information, as can be seen from the flattening of the curve.\footnote{Note that the second and third non-chameleons see the first non-chameleon's response while generating their response words.} 
We also validate this observation by permuting the response words, e.g., by showing the chameleon the response word of the third non-chameleon instead of that of the first. This results in a significant drop in the chameleon's accuracy in guessing the secret word. The less revealing nature of later responses potentially indicates a self-reflection–driven behavior change: LLMs become aware of their suboptimal behavior and adapt a different strategy.

\subsection{Steering Based on Internal Representations}

\textbf{\textit{Linear representation of the information-revealing level in the hidden states of non-chameleon LLM:}}
To better understand the information control capabilities of LLMs, we analyze the internal representations in the intermediate layers of their Transformer architecture~\cite{vaswani2017attention}. Our analysis focuses on the open-weight LLMs Llama 3.1 70B and Qwen3 32B, as closed-source models like GPT-4 provide users only with the final outputs.

We begin by identifying and mapping out the subspace of hidden states that corresponds to the amount of information revealed by the non-chameleon LLM when it generates its response word. The procedure is as follows: First, we collect hidden states from the middle layers of the non-chameleon LLMs. We focus on these layers because previous work has found them to contain most of the high-level semantic information~\cite{geva2021transformer,logitlens,skean2025layer}. We gather these states from 700 games in which the models are instructed to adopt a specific information-revealing level and are at the point of generating their response. Second, we apply Principal Component Analysis (PCA) to these pre-collected hidden states and project them onto the first two principal components.

As shown in Figure \ref{fig:hidden_state_pca} (top), this creates a clear linear \textit{information-revealing direction} in the hidden state space of the LLMs, as captured by the first principal component. This raises the question: \textit{What is the default information-revealing level of the non-chameleon LLM when it is not given a specific instruction?}

To answer this, we remove the instruction and project the resulting hidden states onto the previously identified principal component vector. Figure \ref{fig:hidden_state_pca} (bottom) shows that in this standard gameplay (corresponding to our experiments in Table \ref{tab:matchups}), the LLM defaults to a medium information-revealing level of 6 (\textit{Steering strength 0}). This suggests that the failure of LLMs is due to their misperception of others' skills: while LLMs believe they strike a balance between revealing and concealing, others can, in fact, infer significantly more than what was intended.

\begin{figure}[t]
  \centering
\includegraphics[width=0.93\linewidth]{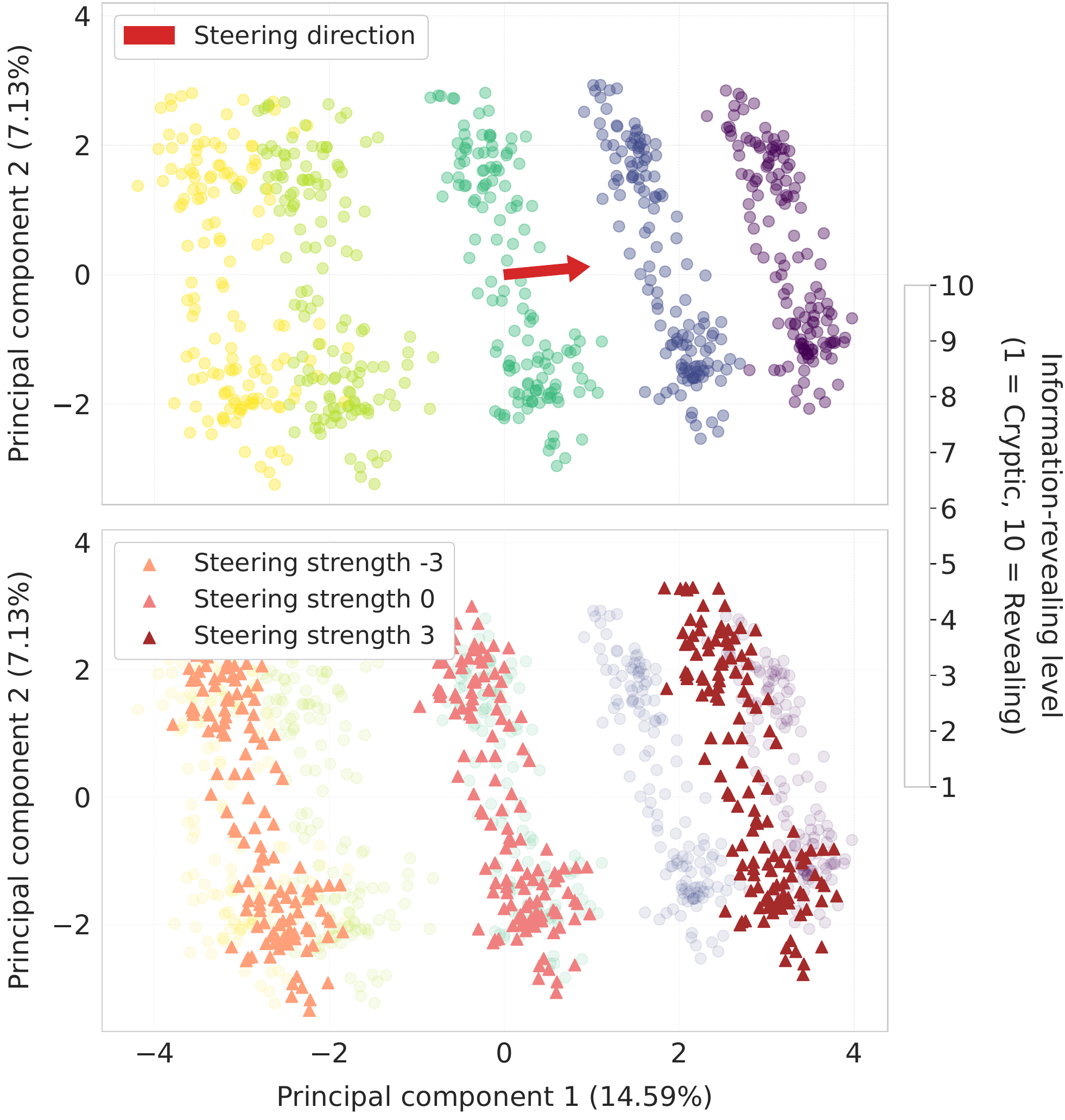}
  \caption{Principal component analysis of the hidden states of Llama 3.1 70B non-chameleon agent. \textit{Upper:} Hidden states when the LLM agent is instructed to be at a specific information-revealing level. The steering vector is rescaled and shifted for illustration purposes. \textit{Lower:} Hidden states of the LLM in a standard gameplay with no instructions (steering strength 0) and in a gameplay with no instructions but steered hidden states (steering strengths 3 and -3).}
  \label{fig:hidden_state_pca}
\end{figure}

\textbf{\textit{Hidden state steering:}}
To confirm that we identified a direction in the hidden state space that the LLM actually uses to select its response word based on the word's revealed information, we perform causal interventions. Specifically, we compute a \textit{steering vector} as the difference between the means of hidden states obtained by instructing the LLM to adopt information-revealing level 3 and 9, followed by normalization and multiplication by a scalar \textit{steering strength} (Figure \ref{fig:hidden_state_pca}, top)\footnote{To demonstrate the practicality of this method, we use hidden states collected from just four games and a single category, instead of from all 700 games used in the initial analysis. Prompts and additional details are provided in \supptext{Appendix \ref{sec:app:additional_experimental_details}.}{the appendix}.} Subsequently, we add this steering vector to the hidden state of the non-chameleon LLM when it is generating its response word. This operation, which is a variation of the Contrastive Activation Addition~\cite{rimsky2024steering}, is illustrated in Figure \ref{fig:hidden_state_pca} (bottom). To validate that this achieves the desired effect of making the non-chameleon more concealing, we make the LLMs play against each other another 100 games, but this time with non-chameleons being steered to conceal more.

\textbf{\textit{Effects of hidden state steering, instructions, temperature adjustments:}} As seen in Table \ref{tab:matchups}, hidden state steering decreases the second-chance win ratio by $44\%$ and $21\%$ for Llama 3.1 and Qwen3, respectively, and as a result, increases the non-chameleon win ratio. Example responses can be found in \supptext{Table \ref{tab:response_words} in Appendix \ref{sec:app:non_chameleon_responses}}{the appendix}. In comparison, verbal instructions to adopt information-revealing level 1 (i.e., be cryptic) are significantly less effective at making the non-chameleon LLMs less information-revealing, as shown by the second-chance win ratio being closer to the nominal case. Similarly, increasing the temperature for generating response words from the non-chameleon LLMs is insufficient to introduce concealment through randomness that would be effective enough to decrease the second-chance win ratio and increase the non-chameleon win ratio. Interestingly, we need to apply relatively high steering strength to change the LLM's responses toward concealment (e.g., 16 for the Llama 3.1, while the whole default spectrum can be spanned by a strength of 6). This suggests that the tested LLMs are incapable of strong strategic information-control capabilities by default, but contain latent skills for strategic gameplay that can be unleashed. Further exploration of how to fully express this capacity is beyond the scope of this work, but represents a promising avenue for future research.

\section{Conclusions}
LLM-based agents have become common in settings that include non-cooperative parties. We use The Chameleon game as a medium to analyze the information control capabilities of LLMs in such settings. In this game, a group of non-chameleon agents who do not know each other aim to identify the chameleon agent without revealing a secret. We theoretically analyze a spectrum, from revealing to concealing, of non-chameleon strategies. Combined with our theoretical analysis, the experimental results show that non-chameleon LLM agents have revealing strategies. While they can infer the chameleon LLM agent successfully, they lose the game as their responses reveal excessive information about the chameleon. We show that when instructions are used, the level of information concealment is linearly encoded in the LLM's internal representations, as clearly demonstrated in the PCA visualizations. Using this linear direction, we steer hidden states to achieve information concealment levels beyond what is achievable with instructions.  
Our results for The Chameleon motivate the development of LLMs that are suitable for strategic interactions by pointing to areas for improvement, such as the need to conceal information without deliberately misleading. 

\section*{Acknowledgements}
This work was supported in part by the Army Research Office under Grant No. W911NF-23-1-0317 and the Office of Naval Research under Grant No. N00014-24-1-2432.

\clearpage
\bibliographystyle{ACM-Reference-Format} 
\bibliography{mybibfile}


\begin{thebibliography}{58}


\ifx \showCODEN    \undefined \def \showCODEN     #1{\unskip}     \fi
\ifx \showDOI      \undefined \def \showDOI       #1{#1}\fi
\ifx \showISBNx    \undefined \def \showISBNx     #1{\unskip}     \fi
\ifx \showISBNxiii \undefined \def \showISBNxiii  #1{\unskip}     \fi
\ifx \showISSN     \undefined \def \showISSN      #1{\unskip}     \fi
\ifx \showLCCN     \undefined \def \showLCCN      #1{\unskip}     \fi
\ifx \shownote     \undefined \def \shownote      #1{#1}          \fi
\ifx \showarticletitle \undefined \def \showarticletitle #1{#1}   \fi
\ifx \showURL      \undefined \def \showURL       {\relax}        \fi
\providecommand\bibfield[2]{#2}
\providecommand\bibinfo[2]{#2}
\providecommand\natexlab[1]{#1}
\providecommand\showeprint[2][]{arXiv:#2}

\bibitem[\protect\citeauthoryear{Abdelnabi, Gomaa, Sivaprasad, Sch{\"o}nherr, and Fritz}{Abdelnabi et~al\mbox{.}}{2024}]%
        {abdelnabi2024cooperation}
\bibfield{author}{\bibinfo{person}{Sahar Abdelnabi}, \bibinfo{person}{Amr Gomaa}, \bibinfo{person}{Sarath Sivaprasad}, \bibinfo{person}{Lea Sch{\"o}nherr}, {and} \bibinfo{person}{Mario Fritz}.} \bibinfo{year}{2024}\natexlab{}.
\newblock \showarticletitle{Cooperation, competition, and maliciousness: {LLM}-stakeholders interactive negotiation}. In \bibinfo{booktitle}{\emph{The Thirty-eight Conference on Neural Information Processing Systems Datasets and Benchmarks Track}}.
\newblock


\bibitem[\protect\citeauthoryear{Akata, Schulz, Coda-Forno, Oh, Bethge, and Schulz}{Akata et~al\mbox{.}}{2023}]%
        {akata2023playing}
\bibfield{author}{\bibinfo{person}{Elif Akata}, \bibinfo{person}{Lion Schulz}, \bibinfo{person}{Julian Coda-Forno}, \bibinfo{person}{Seong~Joon Oh}, \bibinfo{person}{Matthias Bethge}, {and} \bibinfo{person}{Eric Schulz}.} \bibinfo{year}{2023}\natexlab{}.
\newblock \showarticletitle{Playing repeated games with large language models}.
\newblock \bibinfo{journal}{\emph{arXiv preprint arXiv:2305.16867}} (\bibinfo{year}{2023}).
\newblock


\bibitem[\protect\citeauthoryear{Alier, Casa{\~n}, and Filv{\`a}}{Alier et~al\mbox{.}}{2023}]%
        {alier2023smart}
\bibfield{author}{\bibinfo{person}{Marc Alier}, \bibinfo{person}{Mar{\'\i}a~Jos{\'e} Casa{\~n}}, {and} \bibinfo{person}{Daniel~Amo Filv{\`a}}.} \bibinfo{year}{2023}\natexlab{}.
\newblock \showarticletitle{Smart Learning Applications: Leveraging {LLMs} for Contextualized and Ethical Educational Technology}. In \bibinfo{booktitle}{\emph{International conference on technological ecosystems for enhancing multiculturality}}. Springer, \bibinfo{pages}{190--199}.
\newblock


\bibitem[\protect\citeauthoryear{Bianchi, Chia, Yuksekgonul, Tagliabue, Jurafsky, and Zou}{Bianchi et~al\mbox{.}}{2024}]%
        {bianchi2024well}
\bibfield{author}{\bibinfo{person}{Federico Bianchi}, \bibinfo{person}{Patrick~John Chia}, \bibinfo{person}{Mert Yuksekgonul}, \bibinfo{person}{Jacopo Tagliabue}, \bibinfo{person}{Dan Jurafsky}, {and} \bibinfo{person}{James Zou}.} \bibinfo{year}{2024}\natexlab{}.
\newblock \showarticletitle{How well can {LLMs} negotiate? {NEGOTIATIONARENA} platform and analysis}. In \bibinfo{booktitle}{\emph{Proceedings of the 41st International Conference on Machine Learning}} (Vienna, Austria). Article \bibinfo{articleno}{158}, \bibinfo{numpages}{17}~pages.
\newblock


\bibitem[\protect\citeauthoryear{Brookins and DeBacker}{Brookins and DeBacker}{2024}]%
        {brookins2023playing}
\bibfield{author}{\bibinfo{person}{Philip Brookins} {and} \bibinfo{person}{Jason DeBacker}.} \bibinfo{year}{2024}\natexlab{}.
\newblock \showarticletitle{{Playing games with GPT: What can we learn about a large language model from canonical strategic games?}}
\newblock \bibinfo{journal}{\emph{Economics Bulletin}} \bibinfo{volume}{44}, \bibinfo{number}{1} (\bibinfo{year}{2024}), \bibinfo{pages}{25--37}.
\newblock
\urldef\tempurl%
\url{https://ideas.repec.org/a/ebl/ecbull/eb-23-00457.html}
\showURL{%
\tempurl}


\bibitem[\protect\citeauthoryear{B{\"u}rger, Hamprecht, and Nadler}{B{\"u}rger et~al\mbox{.}}{2024}]%
        {burger2024truth}
\bibfield{author}{\bibinfo{person}{Lennart B{\"u}rger}, \bibinfo{person}{Fred~A Hamprecht}, {and} \bibinfo{person}{Boaz Nadler}.} \bibinfo{year}{2024}\natexlab{}.
\newblock \showarticletitle{Truth is Universal: Robust Detection of Lies in LLMs}. In \bibinfo{booktitle}{\emph{The Thirty-eighth Annual Conference on Neural Information Processing Systems}}.
\newblock


\bibitem[\protect\citeauthoryear{Chen, Zhang, Hong, Kundu, and Wang}{Chen et~al\mbox{.}}{2025}]%
        {chen2025seal}
\bibfield{author}{\bibinfo{person}{Runjin Chen}, \bibinfo{person}{Zhenyu Zhang}, \bibinfo{person}{Junyuan Hong}, \bibinfo{person}{Souvik Kundu}, {and} \bibinfo{person}{Zhangyang Wang}.} \bibinfo{year}{2025}\natexlab{}.
\newblock \showarticletitle{{SEAL}: Steerable Reasoning Calibration of Large Language Models for Free}. In \bibinfo{booktitle}{\emph{Second Conference on Language Modeling}}.
\newblock
\urldef\tempurl%
\url{https://openreview.net/forum?id=klPszYDIRT}
\showURL{%
\tempurl}


\bibitem[\protect\citeauthoryear{Comanici, Bieber, Schaekermann, Pasupat, Sachdeva, Dhillon, Blistein, Ram, Zhang, Rosen, et~al\mbox{.}}{Comanici et~al\mbox{.}}{2025}]%
        {gemini}
\bibfield{author}{\bibinfo{person}{Gheorghe Comanici}, \bibinfo{person}{Eric Bieber}, \bibinfo{person}{Mike Schaekermann}, \bibinfo{person}{Ice Pasupat}, \bibinfo{person}{Noveen Sachdeva}, \bibinfo{person}{Inderjit Dhillon}, \bibinfo{person}{Marcel Blistein}, \bibinfo{person}{Ori Ram}, \bibinfo{person}{Dan Zhang}, \bibinfo{person}{Evan Rosen}, {et~al\mbox{.}}} \bibinfo{year}{2025}\natexlab{}.
\newblock \showarticletitle{Gemini 2.5: Pushing the frontier with advanced reasoning, multimodality, long context, and next generation agentic capabilities}.
\newblock \bibinfo{journal}{\emph{arXiv preprint arXiv:2507.06261}} (\bibinfo{year}{2025}).
\newblock


\bibitem[\protect\citeauthoryear{Das, Amini, and Wu}{Das et~al\mbox{.}}{2024}]%
        {das2024security}
\bibfield{author}{\bibinfo{person}{Badhan~Chandra Das}, \bibinfo{person}{M~Hadi Amini}, {and} \bibinfo{person}{Yanzhao Wu}.} \bibinfo{year}{2024}\natexlab{}.
\newblock \showarticletitle{Security and privacy challenges of large language models: A survey}.
\newblock \bibinfo{journal}{\emph{Comput. Surveys}} (\bibinfo{year}{2024}).
\newblock


\bibitem[\protect\citeauthoryear{Dong, Jiang, Jin, and Li}{Dong et~al\mbox{.}}{2024}]%
        {dong2024self}
\bibfield{author}{\bibinfo{person}{Yihong Dong}, \bibinfo{person}{Xue Jiang}, \bibinfo{person}{Zhi Jin}, {and} \bibinfo{person}{Ge Li}.} \bibinfo{year}{2024}\natexlab{}.
\newblock \showarticletitle{Self-collaboration code generation via {ChatGPT}}.
\newblock \bibinfo{journal}{\emph{ACM Transactions on Software Engineering and Methodology}} \bibinfo{volume}{33}, \bibinfo{number}{7} (\bibinfo{year}{2024}), \bibinfo{pages}{1--38}.
\newblock


\bibitem[\protect\citeauthoryear{(FAIR)†, Bakhtin, Brown, Dinan, Farina, Flaherty, Fried, Goff, Gray, Hu, et~al\mbox{.}}{(FAIR)† et~al\mbox{.}}{2022}]%
        {meta2022human}
\bibfield{author}{\bibinfo{person}{Meta Fundamental AI Research Diplomacy~Team (FAIR)†}, \bibinfo{person}{Anton Bakhtin}, \bibinfo{person}{Noam Brown}, \bibinfo{person}{Emily Dinan}, \bibinfo{person}{Gabriele Farina}, \bibinfo{person}{Colin Flaherty}, \bibinfo{person}{Daniel Fried}, \bibinfo{person}{Andrew Goff}, \bibinfo{person}{Jonathan Gray}, \bibinfo{person}{Hengyuan Hu}, {et~al\mbox{.}}} \bibinfo{year}{2022}\natexlab{}.
\newblock \showarticletitle{Human-level play in the game of {Diplomacy} by combining language models with strategic reasoning}.
\newblock \bibinfo{journal}{\emph{Science}} \bibinfo{volume}{378}, \bibinfo{number}{6624} (\bibinfo{year}{2022}), \bibinfo{pages}{1067--1074}.
\newblock


\bibitem[\protect\citeauthoryear{Gemp, Patel, Bachrach, Lanctot, Dasagi, Marris, Piliouras, Liu, and Tuyls}{Gemp et~al\mbox{.}}{2024}]%
        {gemp2024steering}
\bibfield{author}{\bibinfo{person}{Ian Gemp}, \bibinfo{person}{Roma Patel}, \bibinfo{person}{Yoram Bachrach}, \bibinfo{person}{Marc Lanctot}, \bibinfo{person}{Vibhavari Dasagi}, \bibinfo{person}{Luke Marris}, \bibinfo{person}{Georgios Piliouras}, \bibinfo{person}{Siqi Liu}, {and} \bibinfo{person}{Karl Tuyls}.} \bibinfo{year}{2024}\natexlab{}.
\newblock \showarticletitle{Steering Language Models with Game-Theoretic Solvers}. In \bibinfo{booktitle}{\emph{The 41st International Conference on Machine Learning, Agentic Markets Workshop}}.
\newblock


\bibitem[\protect\citeauthoryear{Geva, Schuster, Berant, and Levy}{Geva et~al\mbox{.}}{2021}]%
        {geva2021transformer}
\bibfield{author}{\bibinfo{person}{Mor Geva}, \bibinfo{person}{Roei Schuster}, \bibinfo{person}{Jonathan Berant}, {and} \bibinfo{person}{Omer Levy}.} \bibinfo{year}{2021}\natexlab{}.
\newblock \showarticletitle{Transformer Feed-Forward Layers Are Key-Value Memories}. In \bibinfo{booktitle}{\emph{Proceedings of the 2021 Conference on Empirical Methods in Natural Language Processing}}, \bibfield{editor}{\bibinfo{person}{Marie-Francine Moens}, \bibinfo{person}{Xuanjing Huang}, \bibinfo{person}{Lucia Specia}, {and} \bibinfo{person}{Scott Wen-tau Yih}} (Eds.). \bibinfo{publisher}{Association for Computational Linguistics}, \bibinfo{address}{Online and Punta Cana, Dominican Republic}, \bibinfo{pages}{5484--5495}.
\newblock
\urldef\tempurl%
\url{https://doi.org/10.18653/v1/2021.emnlp-main.446}
\showDOI{\tempurl}


\bibitem[\protect\citeauthoryear{{Google}}{{Google}}{2024}]%
        {googlesearch}
\bibfield{author}{\bibinfo{person}{{Google}}.} \bibinfo{year}{2024}\natexlab{}.
\newblock \bibinfo{booktitle}{\emph{{Custom Search JSON API}}}.
\newblock
\urldef\tempurl%
\url{https://developers.google.com/custom-search/v1/overview}
\showURL{%
\tempurl}


\bibitem[\protect\citeauthoryear{Grattafiori, Dubey, Jauhri, Pandey, Kadian, Al-Dahle, Letman, Mathur, Schelten, Vaughan, et~al\mbox{.}}{Grattafiori et~al\mbox{.}}{2024}]%
        {llama3}
\bibfield{author}{\bibinfo{person}{Aaron Grattafiori}, \bibinfo{person}{Abhimanyu Dubey}, \bibinfo{person}{Abhinav Jauhri}, \bibinfo{person}{Abhinav Pandey}, \bibinfo{person}{Abhishek Kadian}, \bibinfo{person}{Ahmad Al-Dahle}, \bibinfo{person}{Aiesha Letman}, \bibinfo{person}{Akhil Mathur}, \bibinfo{person}{Alan Schelten}, \bibinfo{person}{Alex Vaughan}, {et~al\mbox{.}}} \bibinfo{year}{2024}\natexlab{}.
\newblock \showarticletitle{The llama 3 herd of models}.
\newblock \bibinfo{journal}{\emph{arXiv preprint arXiv:2407.21783}} (\bibinfo{year}{2024}).
\newblock


\bibitem[\protect\citeauthoryear{Guo, Yang, Yoo, Lin, Iwasawa, and Matsuo}{Guo et~al\mbox{.}}{2023}]%
        {guo2023suspicion}
\bibfield{author}{\bibinfo{person}{Jiaxian Guo}, \bibinfo{person}{Bo Yang}, \bibinfo{person}{Paul Yoo}, \bibinfo{person}{Bill~Yuchen Lin}, \bibinfo{person}{Yusuke Iwasawa}, {and} \bibinfo{person}{Yutaka Matsuo}.} \bibinfo{year}{2023}\natexlab{}.
\newblock \showarticletitle{Suspicion-agent: Playing imperfect information games with theory of mind aware {GPT-4}}.
\newblock \bibinfo{journal}{\emph{arXiv preprint arXiv:2309.17277}} (\bibinfo{year}{2023}).
\newblock


\bibitem[\protect\citeauthoryear{Hagendorff}{Hagendorff}{2024}]%
        {hagendorff2024deception}
\bibfield{author}{\bibinfo{person}{Thilo Hagendorff}.} \bibinfo{year}{2024}\natexlab{}.
\newblock \showarticletitle{Deception abilities emerged in large language models}.
\newblock \bibinfo{journal}{\emph{Proceedings of the National Academy of Sciences}} \bibinfo{volume}{121}, \bibinfo{number}{24} (\bibinfo{year}{2024}), \bibinfo{pages}{e2317967121}.
\newblock


\bibitem[\protect\citeauthoryear{H{\o}jer, Jarvis, and Heinrich}{H{\o}jer et~al\mbox{.}}{2025}]%
        {hjer2025improving}
\bibfield{author}{\bibinfo{person}{Bertram H{\o}jer}, \bibinfo{person}{Oliver~Simon Jarvis}, {and} \bibinfo{person}{Stefan Heinrich}.} \bibinfo{year}{2025}\natexlab{}.
\newblock \showarticletitle{Improving Reasoning Performance in Large Language Models via Representation Engineering}. In \bibinfo{booktitle}{\emph{The Thirteenth International Conference on Learning Representations}}.
\newblock
\urldef\tempurl%
\url{https://openreview.net/forum?id=IssPhpUsKt}
\showURL{%
\tempurl}


\bibitem[\protect\citeauthoryear{Hou, Shi, Phang, Aung, Adler, and Campbell}{Hou et~al\mbox{.}}{2024}]%
        {hou2024large}
\bibfield{author}{\bibinfo{person}{Betty~Li Hou}, \bibinfo{person}{Kejian Shi}, \bibinfo{person}{Jason Phang}, \bibinfo{person}{James Aung}, \bibinfo{person}{Steven Adler}, {and} \bibinfo{person}{Rosie Campbell}.} \bibinfo{year}{2024}\natexlab{}.
\newblock \showarticletitle{Large Language Models as Misleading Assistants in Conversation}. In \bibinfo{booktitle}{\emph{The 41st International Conference on Machine Learning, Next Generation of AI Safety Workshop}}.
\newblock


\bibitem[\protect\citeauthoryear{Hua, Liu, Li, Amayuelas, Chen, Jiang, Jin, Fan, Sun, Wang, et~al\mbox{.}}{Hua et~al\mbox{.}}{2024}]%
        {hua2024game}
\bibfield{author}{\bibinfo{person}{Wenyue Hua}, \bibinfo{person}{Ollie Liu}, \bibinfo{person}{Lingyao Li}, \bibinfo{person}{Alfonso Amayuelas}, \bibinfo{person}{Julie Chen}, \bibinfo{person}{Lucas Jiang}, \bibinfo{person}{Mingyu Jin}, \bibinfo{person}{Lizhou Fan}, \bibinfo{person}{Fei Sun}, \bibinfo{person}{William Wang}, {et~al\mbox{.}}} \bibinfo{year}{2024}\natexlab{}.
\newblock \showarticletitle{Game-theoretic {LLM}: Agent Workflow for Negotiation Games}.
\newblock \bibinfo{journal}{\emph{arXiv preprint arXiv:2411.05990}} (\bibinfo{year}{2024}).
\newblock


\bibitem[\protect\citeauthoryear{Huang, Cao, Wen, Zhou, and Zhang}{Huang et~al\mbox{.}}{2024}]%
        {huang2024pokergpt}
\bibfield{author}{\bibinfo{person}{Chenghao Huang}, \bibinfo{person}{Yanbo Cao}, \bibinfo{person}{Yinlong Wen}, \bibinfo{person}{Tao Zhou}, {and} \bibinfo{person}{Yanru Zhang}.} \bibinfo{year}{2024}\natexlab{}.
\newblock \showarticletitle{{PokerGPT}: An End-to-End Lightweight Solver for Multi-Player {Texas Hold'em} via Large Language Model}.
\newblock \bibinfo{journal}{\emph{arXiv preprint arXiv:2401.06781}} (\bibinfo{year}{2024}).
\newblock


\bibitem[\protect\citeauthoryear{Karabacak and Margetis}{Karabacak and Margetis}{2023}]%
        {karabacak2023embracing}
\bibfield{author}{\bibinfo{person}{Mert Karabacak} {and} \bibinfo{person}{Konstantinos Margetis}.} \bibinfo{year}{2023}\natexlab{}.
\newblock \showarticletitle{Embracing large language models for medical applications: opportunities and challenges}.
\newblock \bibinfo{journal}{\emph{Cureus}} \bibinfo{volume}{15}, \bibinfo{number}{5} (\bibinfo{year}{2023}).
\newblock


\bibitem[\protect\citeauthoryear{Kim and Kim}{Kim and Kim}{2023}]%
        {kim2023generative}
\bibfield{author}{\bibinfo{person}{Munyeong Kim} {and} \bibinfo{person}{Sungsu Kim}.} \bibinfo{year}{2023}\natexlab{}.
\newblock \showarticletitle{Generative AI in Mafia-like Game Simulation}.
\newblock \bibinfo{journal}{\emph{arXiv preprint arXiv:2309.11672}} (\bibinfo{year}{2023}).
\newblock


\bibitem[\protect\citeauthoryear{Krishnamurthy, Harris, Foster, Zhang, and Slivkins}{Krishnamurthy et~al\mbox{.}}{2024}]%
        {krishnamurthy2024can}
\bibfield{author}{\bibinfo{person}{Akshay Krishnamurthy}, \bibinfo{person}{Keegan Harris}, \bibinfo{person}{Dylan~J Foster}, \bibinfo{person}{Cyril Zhang}, {and} \bibinfo{person}{Aleksandrs Slivkins}.} \bibinfo{year}{2024}\natexlab{}.
\newblock \showarticletitle{Can large language models explore in-context?}. In \bibinfo{booktitle}{\emph{The 41st International Conference on Machine Learning, In-Context Learning Workshop}}.
\newblock


\bibitem[\protect\citeauthoryear{Li, Fan, Chen, Gai, Gong, Zhang, and Liu}{Li et~al\mbox{.}}{2025}]%
        {li2025fairsteer}
\bibfield{author}{\bibinfo{person}{Yichen Li}, \bibinfo{person}{Zhiting Fan}, \bibinfo{person}{Ruizhe Chen}, \bibinfo{person}{Xiaotang Gai}, \bibinfo{person}{Luqi Gong}, \bibinfo{person}{Yan Zhang}, {and} \bibinfo{person}{Zuozhu Liu}.} \bibinfo{year}{2025}\natexlab{}.
\newblock \showarticletitle{{F}air{S}teer: Inference Time Debiasing for {LLM}s with Dynamic Activation Steering}. In \bibinfo{booktitle}{\emph{Findings of the Association for Computational Linguistics: ACL 2025}}, \bibfield{editor}{\bibinfo{person}{Wanxiang Che}, \bibinfo{person}{Joyce Nabende}, \bibinfo{person}{Ekaterina Shutova}, {and} \bibinfo{person}{Mohammad~Taher Pilehvar}} (Eds.). \bibinfo{publisher}{Association for Computational Linguistics}, \bibinfo{address}{Vienna, Austria}, \bibinfo{pages}{11293--11312}.
\newblock
\showISBNx{979-8-89176-256-5}
\urldef\tempurl%
\url{https://doi.org/10.18653/v1/2025.findings-acl.589}
\showDOI{\tempurl}


\bibitem[\protect\citeauthoryear{Lin, Kim, et~al\mbox{.}}{Lin et~al\mbox{.}}{2024}]%
        {lin2024llm}
\bibfield{author}{\bibinfo{person}{Feng Lin}, \bibinfo{person}{Dong~Jae Kim}, {et~al\mbox{.}}} \bibinfo{year}{2024}\natexlab{}.
\newblock \showarticletitle{When llm-based code generation meets the software development process}.
\newblock \bibinfo{journal}{\emph{arXiv preprint arXiv:2403.15852}} (\bibinfo{year}{2024}).
\newblock


\bibitem[\protect\citeauthoryear{Lor{\`e} and Heydari}{Lor{\`e} and Heydari}{2023}]%
        {lore2023strategic}
\bibfield{author}{\bibinfo{person}{Nunzio Lor{\`e}} {and} \bibinfo{person}{Babak Heydari}.} \bibinfo{year}{2023}\natexlab{}.
\newblock \showarticletitle{Strategic behavior of large language models: Game structure vs. contextual framing}.
\newblock \bibinfo{journal}{\emph{arXiv preprint arXiv:2309.05898}} (\bibinfo{year}{2023}).
\newblock


\bibitem[\protect\citeauthoryear{Mukobi, Erlebach, Lauffer, Hammond, Chan, and Clifton}{Mukobi et~al\mbox{.}}{2023}]%
        {mukobi2023welfare}
\bibfield{author}{\bibinfo{person}{Gabriel Mukobi}, \bibinfo{person}{Hannah Erlebach}, \bibinfo{person}{Niklas Lauffer}, \bibinfo{person}{Lewis Hammond}, \bibinfo{person}{Alan Chan}, {and} \bibinfo{person}{Jesse Clifton}.} \bibinfo{year}{2023}\natexlab{}.
\newblock \showarticletitle{Welfare diplomacy: Benchmarking language model cooperation}.
\newblock \bibinfo{journal}{\emph{arXiv preprint arXiv:2310.08901}} (\bibinfo{year}{2023}).
\newblock


\bibitem[\protect\citeauthoryear{nostalgebraist}{nostalgebraist}{2020}]%
        {logitlens}
\bibfield{author}{\bibinfo{person}{nostalgebraist}.} \bibinfo{year}{2020}\natexlab{}.
\newblock \bibinfo{title}{interpreting GPT: the logit lens}.
\newblock \bibinfo{howpublished}{\url{https://www.lesswrong. com/posts/AcKRB8wDpdaN6v6ru/interpreting-gpt-the-logit-lens. https://www.lesswrong.com/ posts/AcKRB8wDpdaN6v6ru/ interpreting-gpt-the-logit-lens}}.
\newblock
\newblock
\shownote{[Accessed 06-10-2025].}


\bibitem[\protect\citeauthoryear{O'Gara}{O'Gara}{2023}]%
        {o2023hoodwinked}
\bibfield{author}{\bibinfo{person}{Aidan O'Gara}.} \bibinfo{year}{2023}\natexlab{}.
\newblock \showarticletitle{Hoodwinked: Deception and cooperation in a text-based game for language models}.
\newblock \bibinfo{journal}{\emph{arXiv preprint arXiv:2308.01404}} (\bibinfo{year}{2023}).
\newblock


\bibitem[\protect\citeauthoryear{{OpenAI}}{{OpenAI}}{2024}]%
        {openai_gpt4_o}
\bibfield{author}{\bibinfo{person}{{OpenAI}}.} \bibinfo{year}{2024}\natexlab{}.
\newblock \bibinfo{booktitle}{\emph{{GPT-4o}}}.
\newblock
\urldef\tempurl%
\url{https://openai.com/}
\showURL{%
\tempurl}
\newblock
\shownote{Version: 2024-08-06.}


\bibitem[\protect\citeauthoryear{{OpenAI}}{{OpenAI}}{2025a}]%
        {openai_gpt41}
\bibfield{author}{\bibinfo{person}{{OpenAI}}.} \bibinfo{year}{2025}\natexlab{a}.
\newblock \bibinfo{booktitle}{\emph{{GPT-4.1}}}.
\newblock
\urldef\tempurl%
\url{https://openai.com/}
\showURL{%
\tempurl}
\newblock
\shownote{Version: 2025-04-14.}


\bibitem[\protect\citeauthoryear{{OpenAI}}{{OpenAI}}{2025b}]%
        {openai_gpt5}
\bibfield{author}{\bibinfo{person}{{OpenAI}}.} \bibinfo{year}{2025}\natexlab{b}.
\newblock \bibinfo{booktitle}{\emph{{GPT-5}}}.
\newblock
\urldef\tempurl%
\url{https://openai.com/}
\showURL{%
\tempurl}
\newblock
\shownote{Version: 2025-08-07.}


\bibitem[\protect\citeauthoryear{Park, Liu, Ozdaglar, and Zhang}{Park et~al\mbox{.}}{2024}]%
        {park2024llm}
\bibfield{author}{\bibinfo{person}{Chanwoo Park}, \bibinfo{person}{Xiangyu Liu}, \bibinfo{person}{Asuman Ozdaglar}, {and} \bibinfo{person}{Kaiqing Zhang}.} \bibinfo{year}{2024}\natexlab{}.
\newblock \showarticletitle{Do {LLM} agents have regret? a case study in online learning and games}.
\newblock \bibinfo{journal}{\emph{arXiv preprint arXiv:2403.16843}} (\bibinfo{year}{2024}).
\newblock


\bibitem[\protect\citeauthoryear{Park, O'Brien, Cai, Morris, Liang, and Bernstein}{Park et~al\mbox{.}}{2023}]%
        {park2023generative}
\bibfield{author}{\bibinfo{person}{Joon~Sung Park}, \bibinfo{person}{Joseph O'Brien}, \bibinfo{person}{Carrie~Jun Cai}, \bibinfo{person}{Meredith~Ringel Morris}, \bibinfo{person}{Percy Liang}, {and} \bibinfo{person}{Michael~S Bernstein}.} \bibinfo{year}{2023}\natexlab{}.
\newblock \showarticletitle{Generative agents: Interactive simulacra of human behavior}. In \bibinfo{booktitle}{\emph{Proceedings of the 36th annual ACM symposium on user interface software and technology}}. \bibinfo{pages}{1--22}.
\newblock


\bibitem[\protect\citeauthoryear{Rimsky, Gabrieli, Schulz, Tong, Hubinger, and Turner}{Rimsky et~al\mbox{.}}{2024}]%
        {rimsky2024steering}
\bibfield{author}{\bibinfo{person}{Nina Rimsky}, \bibinfo{person}{Nick Gabrieli}, \bibinfo{person}{Julian Schulz}, \bibinfo{person}{Meg Tong}, \bibinfo{person}{Evan Hubinger}, {and} \bibinfo{person}{Alexander Turner}.} \bibinfo{year}{2024}\natexlab{}.
\newblock \showarticletitle{Steering Llama 2 via Contrastive Activation Addition}. In \bibinfo{booktitle}{\emph{Proceedings of the 62nd Annual Meeting of the Association for Computational Linguistics (Volume 1: Long Papers)}}, \bibfield{editor}{\bibinfo{person}{Lun-Wei Ku}, \bibinfo{person}{Andre Martins}, {and} \bibinfo{person}{Vivek Srikumar}} (Eds.). \bibinfo{publisher}{Association for Computational Linguistics}, \bibinfo{address}{Bangkok, Thailand}, \bibinfo{pages}{15504--15522}.
\newblock
\urldef\tempurl%
\url{https://doi.org/10.18653/v1/2024.acl-long.828}
\showDOI{\tempurl}


\bibitem[\protect\citeauthoryear{Singhal, Azizi, Tu, Mahdavi, Wei, Chung, Scales, Tanwani, Cole-Lewis, Pfohl, et~al\mbox{.}}{Singhal et~al\mbox{.}}{2023}]%
        {singhal2023large}
\bibfield{author}{\bibinfo{person}{Karan Singhal}, \bibinfo{person}{Shekoofeh Azizi}, \bibinfo{person}{Tao Tu}, \bibinfo{person}{S~Sara Mahdavi}, \bibinfo{person}{Jason Wei}, \bibinfo{person}{Hyung~Won Chung}, \bibinfo{person}{Nathan Scales}, \bibinfo{person}{Ajay Tanwani}, \bibinfo{person}{Heather Cole-Lewis}, \bibinfo{person}{Stephen Pfohl}, {et~al\mbox{.}}} \bibinfo{year}{2023}\natexlab{}.
\newblock \showarticletitle{Large language models encode clinical knowledge}.
\newblock \bibinfo{journal}{\emph{Nature}} \bibinfo{volume}{620}, \bibinfo{number}{7972} (\bibinfo{year}{2023}), \bibinfo{pages}{172--180}.
\newblock


\bibitem[\protect\citeauthoryear{Skean, Arefin, Zhao, Patel, Naghiyev, LeCun, and Shwartz-Ziv}{Skean et~al\mbox{.}}{2025}]%
        {skean2025layer}
\bibfield{author}{\bibinfo{person}{Oscar Skean}, \bibinfo{person}{Md~Rifat Arefin}, \bibinfo{person}{Dan Zhao}, \bibinfo{person}{Niket~Nikul Patel}, \bibinfo{person}{Jalal Naghiyev}, \bibinfo{person}{Yann LeCun}, {and} \bibinfo{person}{Ravid Shwartz-Ziv}.} \bibinfo{year}{2025}\natexlab{}.
\newblock \showarticletitle{Layer by Layer: Uncovering Hidden Representations in Language Models}. In \bibinfo{booktitle}{\emph{Forty-second International Conference on Machine Learning}}.
\newblock
\urldef\tempurl%
\url{https://openreview.net/forum?id=WGXb7UdvTX}
\showURL{%
\tempurl}


\bibitem[\protect\citeauthoryear{Tahta}{Tahta}{2017}]%
        {chameleon}
\bibfield{author}{\bibinfo{person}{Rikki Tahta}.} \bibinfo{year}{2017}\natexlab{}.
\newblock \bibinfo{title}{The Chameleon Board Game}.
\newblock \bibinfo{howpublished}{https://bigpotato.com/products/the-chameleon}.
\newblock


\bibitem[\protect\citeauthoryear{Thudium, Cimini, Vyas, Sullivan, Petro, Zhu, and Callison-Burch}{Thudium et~al\mbox{.}}{2024}]%
        {thudiumoutwit}
\bibfield{author}{\bibinfo{person}{Samuel Thudium}, \bibinfo{person}{Federico Cimini}, \bibinfo{person}{Rut Vyas}, \bibinfo{person}{Kyle Sullivan}, \bibinfo{person}{Louis Petro}, \bibinfo{person}{Andrew Zhu}, {and} \bibinfo{person}{Chris Callison-Burch}.} \bibinfo{year}{2024}\natexlab{}.
\newblock \showarticletitle{Outwit, Outplay, Out-Generate: A Framework for Designing Strategic Generative Agents in Competitive Environments}.
\newblock  (\bibinfo{year}{2024}).
\newblock


\bibitem[\protect\citeauthoryear{Turner, Thiergart, Leech, Udell, Vazquez, Mini, and MacDiarmid}{Turner et~al\mbox{.}}{2023}]%
        {turner2023steering}
\bibfield{author}{\bibinfo{person}{Alexander~Matt Turner}, \bibinfo{person}{Lisa Thiergart}, \bibinfo{person}{Gavin Leech}, \bibinfo{person}{David Udell}, \bibinfo{person}{Juan~J Vazquez}, \bibinfo{person}{Ulisse Mini}, {and} \bibinfo{person}{Monte MacDiarmid}.} \bibinfo{year}{2023}\natexlab{}.
\newblock \showarticletitle{Steering language models with activation engineering}.
\newblock \bibinfo{journal}{\emph{arXiv preprint arXiv:2308.10248}} (\bibinfo{year}{2023}).
\newblock


\bibitem[\protect\citeauthoryear{Vaswani, Shazeer, Parmar, Uszkoreit, Jones, Gomez, Kaiser, and Polosukhin}{Vaswani et~al\mbox{.}}{2017}]%
        {vaswani2017attention}
\bibfield{author}{\bibinfo{person}{Ashish Vaswani}, \bibinfo{person}{Noam Shazeer}, \bibinfo{person}{Niki Parmar}, \bibinfo{person}{Jakob Uszkoreit}, \bibinfo{person}{Llion Jones}, \bibinfo{person}{Aidan~N. Gomez}, \bibinfo{person}{\L{}ukasz Kaiser}, {and} \bibinfo{person}{Illia Polosukhin}.} \bibinfo{year}{2017}\natexlab{}.
\newblock \showarticletitle{Attention is all you need}. In \bibinfo{booktitle}{\emph{Proceedings of the 31st International Conference on Neural Information Processing Systems}} (Long Beach, California, USA) \emph{(\bibinfo{series}{NIPS'17})}. \bibinfo{publisher}{Curran Associates Inc.}, \bibinfo{address}{Red Hook, NY, USA}, \bibinfo{pages}{6000–6010}.
\newblock
\showISBNx{9781510860964}


\bibitem[\protect\citeauthoryear{Wang, Zhu, Liu, Ding, Guo, Ye, Zhou, and Yu}{Wang et~al\mbox{.}}{2024}]%
        {wang2024unique}
\bibfield{author}{\bibinfo{person}{Shang Wang}, \bibinfo{person}{Tianqing Zhu}, \bibinfo{person}{Bo Liu}, \bibinfo{person}{Ming Ding}, \bibinfo{person}{Xu Guo}, \bibinfo{person}{Dayong Ye}, \bibinfo{person}{Wanlei Zhou}, {and} \bibinfo{person}{Philip~S Yu}.} \bibinfo{year}{2024}\natexlab{}.
\newblock \showarticletitle{Unique security and privacy threats of large language model: A comprehensive survey}.
\newblock \bibinfo{journal}{\emph{arXiv preprint arXiv:2406.07973}} (\bibinfo{year}{2024}).
\newblock


\bibitem[\protect\citeauthoryear{Wang, Jiao, Zhu, Chen, He, Chu, Gao, Wang, and Ma}{Wang et~al\mbox{.}}{2025}]%
        {wang2025adaptive}
\bibfield{author}{\bibinfo{person}{Tianlong Wang}, \bibinfo{person}{Xianfeng Jiao}, \bibinfo{person}{Yinghao Zhu}, \bibinfo{person}{Zhongzhi Chen}, \bibinfo{person}{Yifan He}, \bibinfo{person}{Xu Chu}, \bibinfo{person}{Junyi Gao}, \bibinfo{person}{Yasha Wang}, {and} \bibinfo{person}{Liantao Ma}.} \bibinfo{year}{2025}\natexlab{}.
\newblock \showarticletitle{Adaptive activation steering: A tuning-free llm truthfulness improvement method for diverse hallucinations categories}. In \bibinfo{booktitle}{\emph{Proceedings of the ACM on Web Conference 2025}}. \bibinfo{pages}{2562--2578}.
\newblock


\bibitem[\protect\citeauthoryear{Wei, Wang, Schuurmans, Bosma, Xia, Chi, Le, Zhou, et~al\mbox{.}}{Wei et~al\mbox{.}}{2022}]%
        {wei2022chain}
\bibfield{author}{\bibinfo{person}{Jason Wei}, \bibinfo{person}{Xuezhi Wang}, \bibinfo{person}{Dale Schuurmans}, \bibinfo{person}{Maarten Bosma}, \bibinfo{person}{Fei Xia}, \bibinfo{person}{Ed Chi}, \bibinfo{person}{Quoc~V Le}, \bibinfo{person}{Denny Zhou}, {et~al\mbox{.}}} \bibinfo{year}{2022}\natexlab{}.
\newblock \showarticletitle{Chain-of-thought prompting elicits reasoning in large language models}.
\newblock \bibinfo{journal}{\emph{Advances in neural information processing systems}}  \bibinfo{volume}{35} (\bibinfo{year}{2022}), \bibinfo{pages}{24824--24837}.
\newblock


\bibitem[\protect\citeauthoryear{Weissman, Ordentlich, Seroussi, Verdu, and Weinberger}{Weissman et~al\mbox{.}}{2003}]%
        {weissman2003inequalities}
\bibfield{author}{\bibinfo{person}{Tsachy Weissman}, \bibinfo{person}{Erik Ordentlich}, \bibinfo{person}{Gadiel Seroussi}, \bibinfo{person}{Sergio Verdu}, {and} \bibinfo{person}{Marcelo~J Weinberger}.} \bibinfo{year}{2003}\natexlab{}.
\newblock \showarticletitle{Inequalities for the {L1} deviation of the empirical distribution}.
\newblock \bibinfo{journal}{\emph{Hewlett-Packard Labs, Tech. Rep}} (\bibinfo{year}{2003}), \bibinfo{pages}{125}.
\newblock


\bibitem[\protect\citeauthoryear{Wu, Irsoy, Lu, Dabravolski, Dredze, Gehrmann, Kambadur, Rosenberg, and Mann}{Wu et~al\mbox{.}}{2023}]%
        {wu2023bloomberggpt}
\bibfield{author}{\bibinfo{person}{Shijie Wu}, \bibinfo{person}{Ozan Irsoy}, \bibinfo{person}{Steven Lu}, \bibinfo{person}{Vadim Dabravolski}, \bibinfo{person}{Mark Dredze}, \bibinfo{person}{Sebastian Gehrmann}, \bibinfo{person}{Prabhanjan Kambadur}, \bibinfo{person}{David Rosenberg}, {and} \bibinfo{person}{Gideon Mann}.} \bibinfo{year}{2023}\natexlab{}.
\newblock \showarticletitle{Bloomberggpt: A large language model for finance}.
\newblock \bibinfo{journal}{\emph{arXiv preprint arXiv:2303.17564}} (\bibinfo{year}{2023}).
\newblock


\bibitem[\protect\citeauthoryear{Xiao, Xu, Zhang, Wang, and Xia}{Xiao et~al\mbox{.}}{2023}]%
        {xiao2023evaluating}
\bibfield{author}{\bibinfo{person}{Changrong Xiao}, \bibinfo{person}{Sean~Xin Xu}, \bibinfo{person}{Kunpeng Zhang}, \bibinfo{person}{Yufang Wang}, {and} \bibinfo{person}{Lei Xia}.} \bibinfo{year}{2023}\natexlab{}.
\newblock \showarticletitle{Evaluating reading comprehension exercises generated by LLMs: A showcase of {ChatGPT} in education applications}. In \bibinfo{booktitle}{\emph{Proceedings of the 18th Workshop on Innovative Use of NLP for Building Educational Applications (BEA 2023)}}. \bibinfo{pages}{610--625}.
\newblock


\bibitem[\protect\citeauthoryear{Xu, Hu, Zhou, Ren, Dong, Keutzer, Ng, and Feng}{Xu et~al\mbox{.}}{2024}]%
        {xu2024magic}
\bibfield{author}{\bibinfo{person}{Lin Xu}, \bibinfo{person}{Zhiyuan Hu}, \bibinfo{person}{Daquan Zhou}, \bibinfo{person}{Hongyu Ren}, \bibinfo{person}{Zhen Dong}, \bibinfo{person}{Kurt Keutzer}, \bibinfo{person}{See~Kiong Ng}, {and} \bibinfo{person}{Jiashi Feng}.} \bibinfo{year}{2024}\natexlab{}.
\newblock \showarticletitle{MAgIC: Investigation of Large Language Model Powered Multi-Agent in Cognition, Adaptability, Rationality and Collaboration}. In \bibinfo{booktitle}{\emph{Proceedings of the 2024 Conference on Empirical Methods in Natural Language Processing}}. \bibinfo{pages}{7315--7332}.
\newblock


\bibitem[\protect\citeauthoryear{Xu, Wang, Li, Luo, Wang, Liu, and Liu}{Xu et~al\mbox{.}}{2023a}]%
        {xu2023exploring}
\bibfield{author}{\bibinfo{person}{Yuzhuang Xu}, \bibinfo{person}{Shuo Wang}, \bibinfo{person}{Peng Li}, \bibinfo{person}{Fuwen Luo}, \bibinfo{person}{Xiaolong Wang}, \bibinfo{person}{Weidong Liu}, {and} \bibinfo{person}{Yang Liu}.} \bibinfo{year}{2023}\natexlab{a}.
\newblock \showarticletitle{Exploring large language models for communication games: An empirical study on {Werewolf}}.
\newblock \bibinfo{journal}{\emph{arXiv preprint arXiv:2309.04658}} (\bibinfo{year}{2023}).
\newblock


\bibitem[\protect\citeauthoryear{Xu, Yu, Fang, Wang, and Wu}{Xu et~al\mbox{.}}{2023b}]%
        {xu2023language}
\bibfield{author}{\bibinfo{person}{Zelai Xu}, \bibinfo{person}{Chao Yu}, \bibinfo{person}{Fei Fang}, \bibinfo{person}{Yu Wang}, {and} \bibinfo{person}{Yi Wu}.} \bibinfo{year}{2023}\natexlab{b}.
\newblock \showarticletitle{Language agents with reinforcement learning for strategic play in the {Werewolf} game}.
\newblock \bibinfo{journal}{\emph{arXiv preprint arXiv:2310.18940}} (\bibinfo{year}{2023}).
\newblock


\bibitem[\protect\citeauthoryear{Yang, Li, Yang, Zhang, Hui, Zheng, Yu, Gao, Huang, Lv, Zheng, Liu, Zhou, Huang, Hu, Ge, Wei, Lin, Tang, Yang, Tu, Zhang, Yang, Yang, Zhou, Zhou, Lin, Dang, Bao, Yang, Yu, Deng, Li, Xue, Li, Zhang, Wang, Zhu, Men, Gao, Liu, Luo, Li, Tang, Yin, Ren, Wang, Zhang, Ren, Fan, Su, Zhang, Zhang, Wan, Liu, Wang, Cui, Zhang, Zhou, and Qiu}{Yang et~al\mbox{.}}{2025}]%
        {qwen3}
\bibfield{author}{\bibinfo{person}{An Yang}, \bibinfo{person}{Anfeng Li}, \bibinfo{person}{Baosong Yang}, \bibinfo{person}{Beichen Zhang}, \bibinfo{person}{Binyuan Hui}, \bibinfo{person}{Bo Zheng}, \bibinfo{person}{Bowen Yu}, \bibinfo{person}{Chang Gao}, \bibinfo{person}{Chengen Huang}, \bibinfo{person}{Chenxu Lv}, \bibinfo{person}{Chujie Zheng}, \bibinfo{person}{Dayiheng Liu}, \bibinfo{person}{Fan Zhou}, \bibinfo{person}{Fei Huang}, \bibinfo{person}{Feng Hu}, \bibinfo{person}{Hao Ge}, \bibinfo{person}{Haoran Wei}, \bibinfo{person}{Huan Lin}, \bibinfo{person}{Jialong Tang}, \bibinfo{person}{Jian Yang}, \bibinfo{person}{Jianhong Tu}, \bibinfo{person}{Jianwei Zhang}, \bibinfo{person}{Jianxin Yang}, \bibinfo{person}{Jiaxi Yang}, \bibinfo{person}{Jing Zhou}, \bibinfo{person}{Jingren Zhou}, \bibinfo{person}{Junyang Lin}, \bibinfo{person}{Kai Dang}, \bibinfo{person}{Keqin Bao}, \bibinfo{person}{Kexin Yang}, \bibinfo{person}{Le Yu}, \bibinfo{person}{Lianghao Deng}, \bibinfo{person}{Mei Li}, \bibinfo{person}{Mingfeng
  Xue}, \bibinfo{person}{Mingze Li}, \bibinfo{person}{Pei Zhang}, \bibinfo{person}{Peng Wang}, \bibinfo{person}{Qin Zhu}, \bibinfo{person}{Rui Men}, \bibinfo{person}{Ruize Gao}, \bibinfo{person}{Shixuan Liu}, \bibinfo{person}{Shuang Luo}, \bibinfo{person}{Tianhao Li}, \bibinfo{person}{Tianyi Tang}, \bibinfo{person}{Wenbiao Yin}, \bibinfo{person}{Xingzhang Ren}, \bibinfo{person}{Xinyu Wang}, \bibinfo{person}{Xinyu Zhang}, \bibinfo{person}{Xuancheng Ren}, \bibinfo{person}{Yang Fan}, \bibinfo{person}{Yang Su}, \bibinfo{person}{Yichang Zhang}, \bibinfo{person}{Yinger Zhang}, \bibinfo{person}{Yu Wan}, \bibinfo{person}{Yuqiong Liu}, \bibinfo{person}{Zekun Wang}, \bibinfo{person}{Zeyu Cui}, \bibinfo{person}{Zhenru Zhang}, \bibinfo{person}{Zhipeng Zhou}, {and} \bibinfo{person}{Zihan Qiu}.} \bibinfo{year}{2025}\natexlab{}.
\newblock \bibinfo{title}{Qwen3 Technical Report}.
\newblock
\newblock
\showeprint[arxiv]{2505.09388}~[cs.CL]
\urldef\tempurl%
\url{https://arxiv.org/abs/2505.09388}
\showURL{%
\tempurl}


\bibitem[\protect\citeauthoryear{Yang, Liu, and Wang}{Yang et~al\mbox{.}}{2023}]%
        {yang2023fingpt}
\bibfield{author}{\bibinfo{person}{Hongyang Yang}, \bibinfo{person}{Xiao-Yang Liu}, {and} \bibinfo{person}{Christina~Dan Wang}.} \bibinfo{year}{2023}\natexlab{}.
\newblock \showarticletitle{Fingpt: Open-source financial large language models}.
\newblock \bibinfo{journal}{\emph{arXiv preprint arXiv:2306.06031}} (\bibinfo{year}{2023}).
\newblock


\bibitem[\protect\citeauthoryear{Yang, Berthellemy, and Topcu}{Yang et~al\mbox{.}}{2024}]%
        {yang2024reasoning}
\bibfield{author}{\bibinfo{person}{Yunhao Yang}, \bibinfo{person}{Leonard Berthellemy}, {and} \bibinfo{person}{Ufuk Topcu}.} \bibinfo{year}{2024}\natexlab{}.
\newblock \showarticletitle{Reasoning, Memorization, and Fine-Tuning Language Models for Non-Cooperative Games}.
\newblock \bibinfo{journal}{\emph{arXiv preprint arXiv:2410.14890}} (\bibinfo{year}{2024}).
\newblock


\bibitem[\protect\citeauthoryear{Yao, Duan, Xu, Cai, Sun, and Zhang}{Yao et~al\mbox{.}}{2024}]%
        {yao2024survey}
\bibfield{author}{\bibinfo{person}{Yifan Yao}, \bibinfo{person}{Jinhao Duan}, \bibinfo{person}{Kaidi Xu}, \bibinfo{person}{Yuanfang Cai}, \bibinfo{person}{Zhibo Sun}, {and} \bibinfo{person}{Yue Zhang}.} \bibinfo{year}{2024}\natexlab{}.
\newblock \showarticletitle{A survey on large language model ({LLM}) security and privacy: The good, the bad, and the ugly}.
\newblock \bibinfo{journal}{\emph{High-Confidence Computing}} (\bibinfo{year}{2024}), \bibinfo{pages}{100211}.
\newblock


\bibitem[\protect\citeauthoryear{Yoo and Kim}{Yoo and Kim}{2024}]%
        {yoo2024finding}
\bibfield{author}{\bibinfo{person}{Byunghwa Yoo} {and} \bibinfo{person}{Kyung-Joong Kim}.} \bibinfo{year}{2024}\natexlab{}.
\newblock \showarticletitle{Finding deceivers in social context with large language models and how to find them: the case of the Mafia game}.
\newblock \bibinfo{journal}{\emph{Scientific Reports}} \bibinfo{volume}{14}, \bibinfo{number}{1} (\bibinfo{year}{2024}), \bibinfo{pages}{30946}.
\newblock


\bibitem[\protect\citeauthoryear{Zhang, Wang, Li, Ao, and He}{Zhang et~al\mbox{.}}{2025}]%
        {zhang2025control}
\bibfield{author}{\bibinfo{person}{Hanyu Zhang}, \bibinfo{person}{Xiting Wang}, \bibinfo{person}{Chengao Li}, \bibinfo{person}{Xiang Ao}, {and} \bibinfo{person}{Qing He}.} \bibinfo{year}{2025}\natexlab{}.
\newblock \showarticletitle{Controlling large language models through concept activation vectors}. In \bibinfo{booktitle}{\emph{Proceedings of the Thirty-Ninth AAAI Conference on Artificial Intelligence and Thirty-Seventh Conference on Innovative Applications of Artificial Intelligence and Fifteenth Symposium on Educational Advances in Artificial Intelligence}} \emph{(\bibinfo{series}{AAAI'25/IAAI'25/EAAI'25})}. \bibinfo{publisher}{AAAI Press}, Article \bibinfo{articleno}{2880}, \bibinfo{numpages}{9}~pages.
\newblock
\showISBNx{978-1-57735-897-8}
\urldef\tempurl%
\url{https://doi.org/10.1609/aaai.v39i24.34778}
\showDOI{\tempurl}


\bibitem[\protect\citeauthoryear{Zou, Phan, Chen, Campbell, Guo, Ren, Pan, Yin, Mazeika, Dombrowski, et~al\mbox{.}}{Zou et~al\mbox{.}}{2023}]%
        {zou2023representation}
\bibfield{author}{\bibinfo{person}{Andy Zou}, \bibinfo{person}{Long Phan}, \bibinfo{person}{Sarah Chen}, \bibinfo{person}{James Campbell}, \bibinfo{person}{Phillip Guo}, \bibinfo{person}{Richard Ren}, \bibinfo{person}{Alexander Pan}, \bibinfo{person}{Xuwang Yin}, \bibinfo{person}{Mantas Mazeika}, \bibinfo{person}{Ann-Kathrin Dombrowski}, {et~al\mbox{.}}} \bibinfo{year}{2023}\natexlab{}.
\newblock \showarticletitle{Representation engineering: A top-down approach to ai transparency}.
\newblock \bibinfo{journal}{\emph{arXiv preprint arXiv:2310.01405}} (\bibinfo{year}{2023}).
\newblock


\end{thebibliography}

\clearpage
\onecolumn

\supptext{\section{Appendix}

\subsection{LLM-Generated Game Cards}
\label{sec:app:synthetic_game_cards}
For the LLM-generated categories and possible secret words, we prompted GPT-4o with the following prompt:
\begin{lstlisting}[basicstyle=\small,breaklines]
We are playing the game Chameleon but do not have enough game cards (categories and possible secret words).
Please suggest <NUM-OF-NEW-CARDS> new categories and associated 16 secret words for the game. Here are the existing categories and secret words:

<EXISTING-CATEGORIES-AND-SECRET-WORDS>

Please provide the <NUM-OF-NEW-CARDS> new categories and secret words in a JSON format, where each category is a key and the value is a list of secret words. Do not say anything else, just return the JSON.
\end{lstlisting}

\subsection{Additional Experimental Details}
\label{sec:app:additional_experimental_details}

\paragraph{LLM versions:} We used \texttt{gpt-4o-2024-08-06}, \texttt{gpt-4.1-2025-04-14}, and \texttt{gpt-5-2025-08-07} versions of GPT-4o, GPT-4.1, and GPT-5, respectively. For the open-source models, we used quantized Llama 3.1 70B (\texttt{hugging-quants/Meta-Llama-3.1-70B-Instruct-AWQ-INT4}) and Qwen3 32B (\texttt{Qwen/Qwen3-32B-AWQ}) from HuggingFace.

\paragraph{Generating answers with LLMs:}
For deterministic answers that would provide the most accurate hidden-state and game analysis, all but GPT-5 models generated their answers with temperature set to zero. GPT-5 and Gemini 2.5 Pro (\texttt{gemini-2.5-pro}) used chain of thought reasoning~\cite{wei2022chain} when generating a response. We set \texttt{reasoning effort} to \texttt{low} for GPT-5 and considered games as invalid if the answer and reasoning generation of the LLM exceeded 5,000 tokens. As can be seen from the ratios of valid games in Table \ref{tab:valid_matchup_ratios}, this occurred relatively rarely.

\begin{table*}[b]
    \centering
    \caption{Valid game ratios from one hundred games of The Chameleon with LLM agents.}
    \vspace{-0.2cm}
    \ra{1.08}
    \begin{tabular}{lp{4cm}|p{4cm}}
    \toprule
        \multicolumn{1}{c}{\multirow{2}{*}[-2pt]{Non-chameleon LLM}}
        & \multicolumn{2}{c}{Valid games ratio}\\
    \cmidrule{2-3}
        &\centering \textsc{Chameleon: Identical LLM}        
        &\centering\arraybackslash{\textsc{Chameleon: GPT-5}}\\
    \midrule
        GPT-5
            &\centering 0.84
            &\centering\arraybackslash{0.84}
        \\
        GPT-4.1
            &\centering 0.98
            &\centering\arraybackslash{0.98}
        \\
        GPT-4o
            &\centering 0.99
            &\centering\arraybackslash{0.98}
        \\
        Gemini 2.5 Pro
            &\centering 0.89
            &\centering\arraybackslash{0.83}
        \\
        Llama 3.1 70B
            &\centering 0.95
            &\centering\arraybackslash{0.95}
        \\
        \hspace{2pt} + temperature=1
            &\centering 0.98
            &\centering\arraybackslash{0.98}
        \\
        \hspace{2pt} + instructions
            &\centering 0.98
            &\centering\arraybackslash{0.99}
        \\
        \hspace{2pt} + steering
            &\centering 1.00
            &\centering\arraybackslash{0.94}
        \\
        Qwen3 32B
            &\centering 1.00
            &\centering\arraybackslash{0.97}
        \\
        \hspace{2pt} + temperature=1
            &\centering 0.98
            &\centering\arraybackslash{0.97}
        \\
        \hspace{2pt} + instructions
            &\centering 0.99
            &\centering\arraybackslash{0.97}
        \\
        \hspace{2pt} + steering
            &\centering 0.99
            &\centering\arraybackslash{0.98}
        \\
    \bottomrule
    \end{tabular}
    \label{tab:valid_matchup_ratios}
\end{table*}

\paragraph{Generic game prompt:}
We provide an example gameplay from the perspective of one of the non-chameleon GPT-4.1 players below. \verb|GAME>| prefixes user prompts and \verb|PLAYER>| prefixes the responses of the LLM player.

\begin{lstlisting}[basicstyle=\small,breaklines]
GAME> You will play a game of Chameleon with 3 other players. The aim in this game is to unmask the Chameleon without giving away the secret word. If you are the Chameleon, your mission is to blend in with the other players, avoid detection, and work out the secret word. One of the players will be randomly selected as the Chameleon. The Chameleon will know its identity but not the secret word. The other players will know the secret word but will not know who the Chameleon is. Each player (including the Chameleon) takes it in turns to say one word related to the secret word. Take a few moments to think of your word. If another player says the word you were going to say, you can repeat it. Choose wisely! If your word is too obvious, the Chameleon might catch on and figure out the secret word. If it's too cryptic, people might start to think that you're the Chameleon. After all players have said their word, you will vote for who you think the Chameleon is. The player with the most votes will be revealed as the Chameleon. If the Chameleon is correctly revealed, the Chameleon has one last chance to guess the secret word. If the Chameleon guesses correctly, the Chameleon wins. If not, the other players win. If the Chameleon is not correctly revealed, the Chameleon wins. The other players lose. Just say yes or no to confirm that you understand the rules of the game. Do not say anything else than yes or no.
PLAYER> Yes
GAME> The category for this game is Historical Periods. The possible secret words are elizabethan era, bronze age, renaissance, roman empire, industrial revolution, baroque period, iron age, ancient egypt, cold war, age of enlightenment, byzantine era, great depression, roaring twenties, stone age, victorian era, middle ages. Just say yes or no to confirm that you understand the category and possible secret words. Do not say anything else.
PLAYER> Yes
GAME> Let's start the game. You are player number 3 in the running order. Wait for the identities to be chosen. Just say your player number to confirm that you understood your running order. Do not say anything else other than the number.
PLAYER> 3
GAME> You are not the Chameleon. The secret word is: industrial revolution. Wait for your turn to say your word. To confirm that you understand your identity, say yes if you are the Chameleon. Say no if you are not the Chameleon. Do not say anything else.
PLAYER> No
GAME> Here are what the previous players said. Player 1 said Factory. Player 2 said Steam. It is now your turn to say your word. If your word is too obvious, the Chameleon might catch on and figure out the secret word. If it's too cryptic, people might start to think that you're the Chameleon. Make sure that your response is a single word. What is your word?
PLAYER> Textiles
GAME> Here are the words that the next players said: Player 4 said Coal.
GAME> It is now your turn to vote for who you think the Chameleon is. Remember that the Chameleon does not know the secret word and the other players do. If you are the Chameleon, you can vote for another player to win the game. If you are not the Chameleon, you are trying to identify the Chameleon. Who do you think the Chameleon is? Only give the player number. Do not say anything else.
PLAYER> 2
\end{lstlisting}

\paragraph{Instruction to adopt certain information-revealing level:} For our analysis and steering of hidden states of non-chameleon LLMs, we replace the original prompt for response word generation with the following:
\begin{lstlisting}[basicstyle=\small,breaklines]
GAME> It is now your turn to say your word. If your word is too obvious, the Chameleon might catch on and figure out the secret word. If it's too cryptic, people might start to think that you're the Chameleon. On a scale of 1 to 10, where 1 is very cryptic and concealing, and 10 is highly information-revealing, your strategy should be around <INFORMATION-REVEALING-LEVEL>. Make sure that your response is a single word. What is your word?
\end{lstlisting}
where \verb|<INFORMATION-REVEALING-LEVEL>| is an integer between 1 and 10. We employ the same prompt with \verb|<INFORMATION-REVEALING-LEVEL>| set to 1 for the verbally instructed non-chameleon LLMs in Table \ref{tab:matchups}.

\paragraph{Data for hidden-state steering:} We randomly picked one category $C$ and four out of its 16 possible secret words to collect the LLM's hidden states from four games. The choice of category did not significantly affect the results. To then compute the steering vector, we ran each game with an instruction to adopt an information-revealing level of either 3 or 9. We steered the hidden states in layer 40 in both LLMs.

\clearpage
\subsection{Non-Chameleon Responses Before and After Steering}
\label{sec:app:non_chameleon_responses}
\paragraph{Response words of non-chameleons before and after steering:} In Table \ref{tab:response_words}, we provide examples of response words of the Llama 3.1 70B non-chameleons before and after steering. The responses are from the games reported in Table \ref{tab:matchupsmod} and follow the original playing order. 

\begin{table*}[h]
    \centering
    \caption{Example response words of Llama 3.1 70B non-chameleons before and after steering.}
    \vspace{-0.2cm}
    \ra{1.08}
    \begin{tabular}{ccp{4cm}p{3cm}p{3cm}}
    \toprule
        \multicolumn{1}{c}{\multirow{2}{*}[-2pt]{Category}}
        & \multicolumn{1}{c}{\multirow{2}{*}[-2pt]{Secret word}}
        & \multicolumn{1}{c}{\multirow{2}{*}[-2pt]{Possible words}}
        & \multicolumn{2}{c}{\textsc{Response words}}\\
    \cmidrule{4-5}
        &&&\centering Before steering 
        &\centering\arraybackslash{After steering}\\
    \midrule
        Fast Food Chains & burger king
            & kfc, subway, in-n-out, chipotle, taco bell, papa john's, mcdonald's, dairy queen, arby's, little caesars, five guys, chick-fil-a, domino's, wendy's, burger king, pizza hut
            & Whopper, Bun, Patty
            & Sandwich, Fries, Vessel\\
        Historical Periods & bronze age
          & bronze age, middle ages, victorian era, baroque period, renaissance, great depression, iron age, age of enlightenment, elizabethan era, byzantine era, stone age, cold war, roaring twenties, ancient egypt, industrial revolution, roman empire
          & Copper, Tools, Mining
          & Tribes, Nomads, Migration\\
        Foods & pizza
          & pizza, taco, pancakes, noodles, fries, sushi, steak, lasagna, burger, curry, salad, dumplings, sandwich, pasta, soup, bbq
          & Cheese, Italy, Oven
          & herb, mushroom, yeast\\
        Wonders of the World & machu picchu
          & great barrier reef, statue of zeus, petra, great wall of china, mausoleum at halicarnassus, colossus of rhodes, lighthouse of alexandria, grand canyon, temple of artemis, christ the redeemer, colosseum, taj mahal, hanging gardens, chichen itza, pyramid of giza, machu picchu
          & Inca, Andes, Peru
          & Mountain, Fog, Terrain\\
        School & math
          & philosophy, gym, biology, latin, economics, art, math, spanish, physics, history, geography, technology, chemistry, music, english, religion
          & Calculator, Numbers, Equation
          & Puzzle, Code, Symbol\\
        Types of Clothing & scarf
          & tie, blouse, scarf, shorts, vest, coat, shirt, sweater, suit, dress, tunic, jacket, cardigan, skirt, trousers, jeans
          & Neck, Wool, Knit
          & Fabric, Texture, Pattern\\
        Inventions & tv
            & car, electricity, plane, wheel, telephone, radio, camera, compass, computer, printing, gunpowder, internet, tv, matches, steam engine, writing
            & Screen, Remote, Channel
            & Signal, Wave, Flicker\\
        Inventions & Radio
            & car, electricity, plane, wheel, telephone, radio, camera, compass, computer, printing, gunpowder, internet, tv, matches, steam engine, writing
            & Broadcast, Transmitter, Signal
            & Wave, Signal, Vibration\\
    \bottomrule
    \end{tabular}
    \label{tab:response_words}
\end{table*}

\clearpage

\subsection{Additional Experiments}
\label{sec:app:additional_experiments}

\paragraph{Computation of posterior probabilities via web search counts:}
We further validated that the non-chameleon LLM agents reveal excessive information by computing the posterior probabilities over potential words using web search counts (Google Search~\cite{googlesearch}).

Let \(r_{1}, \ldots, r_{m}\) be an the responses given by the first $m$ players. The Bayes' rule gives
\begin{align*}
    \Pr(\word | [r_{1}, \ldots, r_{m}]) = \frac{\Pr([r_{1}, \ldots, r_{m}] | \word) \Pr(\word)}{\Pr([r_{1}, \ldots, r_{m}])},
\end{align*} which implies \[    \Pr(\word | [r_{1}, \ldots, r_{m}]) \propto \Pr([r_{1}, \ldots, r_{m}] | \word)\]
since $\Pr(\word)$ and $\Pr([r_{1}, \ldots, r_{m}])$ are the same for all $\word \in \words$.
For a stationary strategy that does not take the previous response into account, we have \[\Pr([r_{1}, \ldots, r_{m}] | \word) = \prod_{i=1}^{m} \Pr(r_{i} | \word ).\]

We estimate the probability of a response given the word with the number of web search results \[    \Pr(r_{i} | \word ) \approx \frac{\#  r_{i} \wedge \word }{\# \word }\]
where $\# r_{i} \wedge \word$ is the number of results that contain word $\word$, and response $r_{i}$, and $\# \word $ is the number of results that contain word $\word$.
Overall, we have 
\[\Pr(\word | [r_{1}, \ldots, r_{m}]) \underset{\approx}{\propto} \prod_{i=1}^{m} \frac{\#  r_{i} \wedge \word }{\# \word }.\]

\begin{figure}[h]
    \centering
    \begin{subfigure}[]
    {\includegraphics[height=2in]{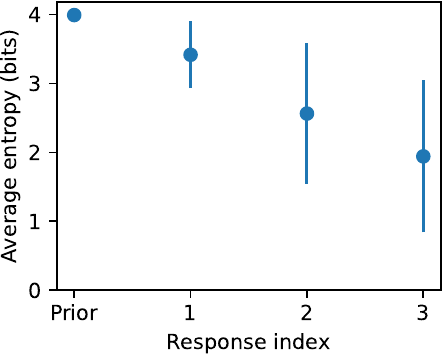}}
    \end{subfigure}%
    \quad \quad
    \begin{subfigure}[]
    {\includegraphics[height=2in]{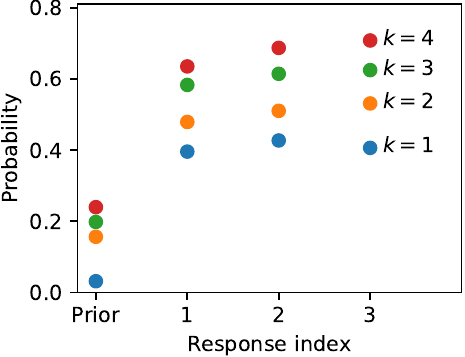}}
    \end{subfigure}
    \caption{(a) The average entropy of the posterior probabilities after responses of the non-chameleons. The prior is $4$-bits as there are $16=2^4$ potential secret words with uniform probabilities. (b) The empirical probability that the secret word is in the $k$-most likely words of the posterior probabilities. We note that the prior is not evenly distributed due to randomness over secret words.}
    \label{fig:entml}
    \end{figure}

In Figure \ref{fig:entml}, we plot the average entropy of the posterior distribution and the empirical probability that the secret word is the most likely word in the posterior distribution for the games between GPT-4o agents. We observe that the non-chameleons' responses are indeed revealing: Even after hearing a single non-chameleon response, the chameleon can guess the secret word correctly with approximately $40\%$ chance only using web search result counts. We remark that chameleon LLM agents achieve $87\%$ after hearing all responses, further indicating their inference capabilities. 

While the entropy of the posterior distribution decreases as the non-chameleons respond, interestingly, the probability of the secret word being the most likely word does not significantly increase after the first response. Aligned with the observations made in Section \ref{sec:experiments}, this phenomenon might indicate that the responses of non-chameleon LLM agents responding later may be strategic. While these agents remove some words from consideration, they do not reveal the secret word further.

\clearpage

\paragraph{Results using older models:}

In Figures \ref{fig:validgamestable}, \ref{fig:identificationtable}, and \ref{fig:wintable}, we demonstrate the identification and win ratios from games using older closed-source models. Except when they play against a GPT 3.5 Turbo chameleon, these models perform similarly to the newer models and have significantly lower win ratios than the trivially achievable levels.

\begin{figure}[h]
  \centering
\includegraphics[width=0.33\linewidth]{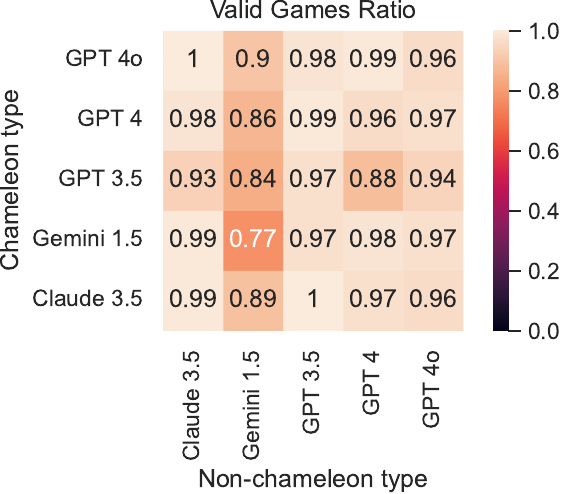}
  \caption{Ratio of valid games between LLMs.}
  \label{fig:validgamestable}
\end{figure}

\begin{figure}[h]
  \centering
\includegraphics[width=0.33\linewidth]{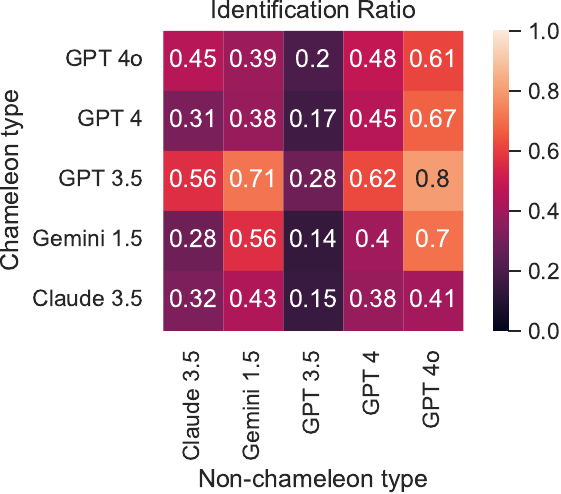}
  \caption{Ratio of games where the chameleon is correctly identified among the valid games.}
  \label{fig:identificationtable}
\end{figure}

\begin{figure}[h]
  \centering
\includegraphics[width=0.33\linewidth]{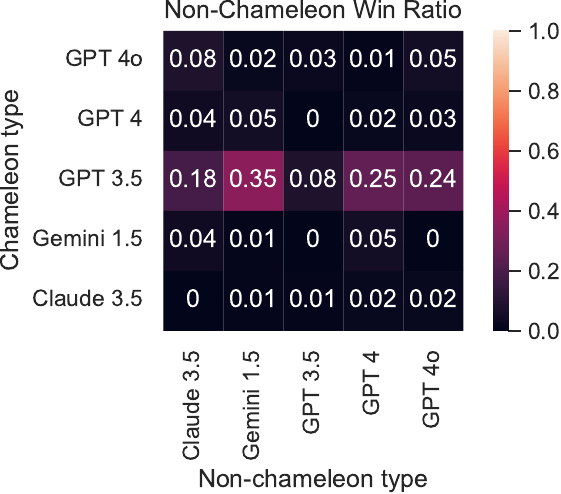}
  \caption{Ratio of games where the non-chameleons win among the valid games.}
  \label{fig:wintable}
\end{figure}

\clearpage
\subsection{Proofs for the Technical Results} 
\label{sec:app:proofs}

\begin{proof}[Proof of Proposition \ref{prop:negklconcealing}]
    We consider that the chameleon uses the distribution $\distribution^{ch} = \frac{1}{\numwordspercategory}\sum_{i=1}^{\numwordspercategory} \distribution^{i}$ to respond and uses the non-chameleon player's strategy to vote.

    Let $(\response_{1}, \ldots, \response_{\numplayers})$ be the responses given by the players. $(\response_{1}, \ldots, \response_{\numplayers})$ follows the distribution $\boldsymbol{\distribution}^{\chameleon} = \distribution^{\secretword}\times\ldots\distribution^{\secretword}\times\distribution^{ch} \times \distribution^{\secretword}\times\ldots\distribution^{\secretword}$ where the $\chameleon-th$ component is $\distribution^{ch}$. 

    We have 
        \begin{align}
        \Pr(\text{The chameleon does not get voted}) &= \sum_{i=1}^{\numplayers} \Pr(c = i) \sum_{j\neq i}\Pr(\text{$j$ gets voted}| c=i)
        \\
        &=\sum_{i=1}^{\numplayers} \frac{1}{\numplayers} \sum_{j\neq i}\Pr(\text{$j$ gets voted}| c=i)
    \end{align}
    
    Since the outcome of the voting only depends on the responses and does not depend on the chameleon's index, we have 
    \begin{align}
        |\Pr(\text{$j$ gets voted}| c=i) - \Pr(\text{$j$ gets voted}| c=j)|  
        &\leq d_{TV}(\boldsymbol{\distribution}^{i}, \boldsymbol{\distribution}^{j})
    \end{align}
    Due to the Pinsker's inequality and the independence of the player's responses, we have
        \begin{align}
        |\Pr(\text{$j$ gets voted}| c=i) - \Pr(\text{$j$ gets voted}| c=j)|  
        &\leq d_{TV}(\boldsymbol{\distribution}^{i}, \boldsymbol{\distribution}^{j})
        \\
        &\leq \sqrt{\frac{1}{2} KL(\boldsymbol{\distribution}^{i} ||\boldsymbol{\distribution}^{j} )}
        \\
        &=
        \sqrt{\frac{1}{2} (KL(\distribution^{\secretword} || \distribution^{ch})) +  KL(\distribution^{ch} || \distribution^{\secretword}))}
    \end{align}

    By the Jensen's inequality, we have 
    \begin{align*}
        &|\Pr(\text{$j$ gets voted}| c=i) - \Pr(\text{$j$ gets voted}| c=j)| 
        \\
        &\leq
        \sqrt{\frac{1}{2} \left(\frac{1}{\numwordspercategory} \sum_{w' \in \words, w' \neq \secretword} KL(\distribution^{\secretword} || \distribution^{w'}) \right) +  \left(\frac{1}{\numwordspercategory} \sum_{w' \in \words, w' \neq \secretword} KL(\distribution^{w'} || \distribution^{\secretword}) \right)}
    \end{align*}

    Due to the $\alpha$-KL pairwise concealing property, we have 
        \begin{align} \label{eq:falsevoting}
        |\Pr(\text{$j$ gets voted}| c=i) - \Pr(\text{$j$ gets voted}| c=j)|  
        &=
        \sqrt{ \frac{(\numwordspercategory - 1) \alpha}{\numwordspercategory} }.
    \end{align}

    Since the chameleon follows the same voting strategy as the non-chameleon players, we have
    \begin{align}
        \Pr(\text{The chameleon does not get voted}) &= \sum_{i=1}^{\numplayers} \Pr(c = i) \sum_{j\neq i}\Pr(\text{$j$ gets voted}| c=i)
        \\
        &=\sum_{i=1}^{\numplayers} \frac{1}{\numplayers} \sum_{j\neq i}\Pr(\text{$j$ gets voted}| c=i)
    \end{align}
    and 
        \begin{align} \label{eq:chgetsvoted}
        \Pr(\text{The chameleon gets voted}) &=\sum_{i=1}^{\numplayers} \frac{1}{\numplayers} \Pr(\text{$i$ gets voted}| c=i).
    \end{align}

    Using \eqref{eq:falsevoting} and \eqref{eq:chgetsvoted}, we get 
        \begin{align}
        \Pr(\text{The chameleon does not get voted}) 
        &\geq \sum_{i=1}^{\numplayers} \frac{1}{\numplayers} \sum_{j\neq i}\left(\Pr(\text{$j$ gets voted}| c=j)  - \sqrt{ \frac{(\numwordspercategory - 1) \alpha}{\numwordspercategory} }\right)
        \\
        &=(\numplayers-1)\Pr(1 - \text{The chameleon gets voted}) - \frac{\numplayers-1}{\numplayers} \sqrt{ \frac{(\numwordspercategory - 1) \alpha}{\numwordspercategory} }
        \end{align}

    Using $\Pr(\text{The chameleon does not get voted}) = 1- \Pr(1 - \text{The chameleon gets voted})$, 
        \begin{align} \label{eq:chgetsvoted}
        \Pr(\text{The chameleon gets voted}) &\leq \frac{1 + \frac{\numplayers-1}{\numplayers} \sqrt{ \frac{(\numwordspercategory - 1) \alpha}{\numwordspercategory} } }{\numplayers}.
    \end{align}
    Since 
        \begin{align} \label{eq:chgetsvoted}
            \Pr(\text{Non-chameleons win}) \leq \Pr(\text{The chameleon gets voted}),
    \end{align} we get the desired result.

\end{proof}

\begin{proof}[Proof of Proposition \ref{prop:negklconcealinghide}]

 Given the secret word $\word^{j}$, the sequence of non-chameleon responses follows the distribution $\boldsymbol{\distribution}^{\word^{j}} = \distribution^{\word^{j}} \times \ldots \times\distribution^{\word^{j}}$. 
 With an abuse of notation, let $\mathcal{A}(\boldsymbol{\distribution}^{\word^{j}})$ denote the distribution of words output by the estimator $\mathcal{A}$ and $\mathcal{A}(\boldsymbol{\distribution}^{\word^{j}})_{\word^{i}}$ be the probability of $\word^{i}$ under $\mathcal{A}(\boldsymbol{\distribution}^{\word^{j}})_{\word^{i}}$. 

 We have
\begin{align}
    \mathbb{E}_{\mathbf{r}}\left[ \mathcal{A}(\mathbf{r}) \neq \secretword \right] &= \sum_{i=1}^{\numwordspercategory} \sum_{j\neq i} \Pr(\secretword = \word^{j}) \mathcal{A}(\boldsymbol{\distribution}^{\word^{j}})_{\word^{i}}
    \\
    &= \sum_{i=1}^{\numwordspercategory} \sum_{j\neq i} \frac{1}{\numwordspercategory} \mathcal{A}(\boldsymbol{\distribution}^{\word^{j}})_{\word^{i}}
    \\
    &= \sum_{i=1}^{\numwordspercategory} \sum_{j\neq i} \frac{1}{\numwordspercategory}  ( \mathcal{A}(\boldsymbol{\distribution}^{\word^{j}})_{\word^{i}} - \mathcal{A}(\boldsymbol{\distribution}^{\word^{i}})_{\word^{i}} + \mathcal{A}(\boldsymbol{\distribution}^{\word^{i}})_{\word^{i}})
    \\
    &= \sum_{i=1}^{\numwordspercategory} \sum_{j\neq i} \frac{1}{\numwordspercategory}   \mathcal{A}(\boldsymbol{\distribution}^{\word^{i}})_{\word^{i}} - \sum_{i=1}^{\numwordspercategory} \sum_{j\neq i} \frac{1}{\numwordspercategory} (\mathcal{A}(\boldsymbol{\distribution}^{\word^{i}})_{\word^{i}} - \mathcal{A}(\boldsymbol{\distribution}^{\word^{j}})_{\word^{i}})
        \\
    &= \sum_{i=1}^{\numwordspercategory} \sum_{j\neq i} \Pr(\secretword = \word^{i})    \mathcal{A}(\boldsymbol{\distribution}^{\word^{i}})_{\word^{i}} - \sum_{i=1}^{\numwordspercategory} \sum_{j\neq i} \frac{1}{\numwordspercategory} (\mathcal{A}(\boldsymbol{\distribution}^{\word^{i}})_{\word^{i}} - \mathcal{A}(\boldsymbol{\distribution}^{\word^{j}})_{\word^{i}})
    \\
    &= (\numwordspercategory-1)\mathbb{E}_{\mathbf{r}}\left[ \mathcal{A}(\mathbf{r}) = \secretword \right]- \sum_{i=1}^{\numwordspercategory} \sum_{j\neq i} \frac{1}{\numwordspercategory} (\mathcal{A}(\boldsymbol{\distribution}^{\word^{i}})_{\word^{i}} - \mathcal{A}(\boldsymbol{\distribution}^{\word^{j}})_{\word^{i}})  
        \\
    &\geq (\numwordspercategory-1)\mathbb{E}_{\mathbf{r}}\left[ \mathcal{A}(\mathbf{r}) = \secretword \right]- \sum_{i=1}^{\numwordspercategory} \sum_{j\neq i} \frac{1}{\numwordspercategory} |(\mathcal{A}(\boldsymbol{\distribution}^{\word^{i}})_{\word^{i}} - \mathcal{A}(\boldsymbol{\distribution}^{\word^{j}})_{\word^{i}})|  
        \\
    &\geq (\numwordspercategory-1)\mathbb{E}_{\mathbf{r}}\left[ \mathcal{A}(\mathbf{r}) = \secretword \right]- \sum_{i=1}^{\numwordspercategory} \sum_{j\neq i} \frac{1}{\numwordspercategory}  d_{TV}(\mathcal{A}(\boldsymbol{\distribution}^{\word^{j}}), \mathcal{A}(\boldsymbol{\distribution}^{\word^{i}}))   
\end{align}
where $d_{TV}(\mathcal{A}(\boldsymbol{\distribution}^{\word^{j}}), \mathcal{A}(\boldsymbol{\distribution}^{\word^{i}}))$ denotes the total variation distance between $\mathcal{A}(\boldsymbol{\distribution}^{\word^{j}})$ and $\mathcal{A}(\boldsymbol{\distribution}^{\word^{i}})$.
Due to the Pinsker's inequality, we have 
\begin{align}
    \mathbb{E}_{\mathbf{r}}\left[ \mathcal{A}(\mathbf{r}) \neq \secretword \right]
    &\geq (\numwordspercategory-1)\mathbb{E}_{\mathbf{r}}\left[ \mathcal{A}(\mathbf{r}) \leq \secretword \right]- \sum_{i=1}^{\numwordspercategory} \sum_{j\neq i} \frac{1}{\numwordspercategory} d_{TV}(\mathcal{A}(\boldsymbol{\distribution}^{\word^{j}}), \mathcal{A}(\boldsymbol{\distribution}^{\word^{i}}))
    \\
    & \geq (\numwordspercategory-1)\mathbb{E}_{\mathbf{r}}\left[ \mathcal{A}(\mathbf{r}) = \secretword \right]- \sum_{i=1}^{\numwordspercategory} \sum_{j\neq i} \frac{1}{\numwordspercategory} \sqrt{KL(\mathcal{A}(\boldsymbol{\distribution}^{\word^{j}})|| \mathcal{A}(\boldsymbol{\distribution}^{\word^{i}}))}
\end{align}

Due to the data processing inequality, the fact that responses are i.i.d., and $\pi^{non}$ is $\alpha$-KL pairwise concealing, we have \begin{align}
    \mathbb{E}_{\mathbf{r}}\left[ \mathcal{A}(\mathbf{r}) \neq \secretword \right]
    &\geq  (\numwordspercategory-1)\mathbb{E}_{\mathbf{r}}\left[ \mathcal{A}(\mathbf{r}) = \secretword \right]- \sum_{i=1}^{\numwordspercategory} \sum_{j\neq i} \frac{1}{\numwordspercategory} \sqrt{KL(\mathcal{A}(\boldsymbol{\distribution}^{\word^{j}})|| \mathcal{A}(\boldsymbol{\distribution}^{\word^{i}}))}
    \\
        &\geq  (\numwordspercategory-1)\mathbb{E}_{\mathbf{r}}\left[ \mathcal{A}(\mathbf{r}) = \secretword \right]- \sum_{i=1}^{\numwordspercategory} \sum_{j\neq i} \frac{1}{\numwordspercategory} \sqrt{KL(\boldsymbol{\distribution}^{\word^{j}}|| \boldsymbol{\distribution}^{\word^{i}})}
            \\
        &\geq  (\numwordspercategory-1)\mathbb{E}_{\mathbf{r}}\left[ \mathcal{A}(\mathbf{r}) = \secretword \right]- \sum_{i=1}^{\numwordspercategory} \sum_{j\neq i} \frac{1}{\numwordspercategory} \sqrt{(\numplayers-1)KL(\distribution^{\word^{j}}|| \distribution^{\word^{i}})}
                    \\
        &\geq  (\numwordspercategory-1)\mathbb{E}_{\mathbf{r}}\left[ \mathcal{A}(\mathbf{r}) = \secretword \right]- \sum_{i=1}^{\numwordspercategory} \sum_{j\neq i} \frac{1}{\numwordspercategory} \sqrt{(\numplayers-1) \alpha}
                            \\
        &=  (\numwordspercategory-1)\mathbb{E}_{\mathbf{r}}\left[ \mathcal{A}(\mathbf{r}) = \secretword \right]- (K-1) \sqrt{(\numplayers-1) \alpha}
\end{align}
By noting that $\mathbb{E}_{\mathbf{r}}\left[ \mathcal{A}(\mathbf{r}) \neq \secretword \right] = 1 - \mathbb{E}_{\mathbf{r}}\left[ \mathcal{A}(\mathbf{r}) = \secretword \right]$, we get the desired result.

\end{proof}

\begin{proof}[Proof of Proposition \ref{prop:negl1revealing}]
     We show that with high probability, the chameleon can win the game in the second stage. We first establish a connection between the L1 distance, the total variation distance, and a partition of the responses $\responses$.

We define 
$$R_{w_{i} >  w_{j}} = \lbrace r | r \in \responses, \distribution^{w_{i}}(r) > \distribution^{ w_{j}}(r) \rbrace,$$
    $$R_{w_{i} = w_{j}} = \lbrace r | r \in \responses, \distribution^{w_{i}}(r) = \distribution^{ w_{j}}(r) \rbrace,$$
    $$R_{w_{i} <  w_{j}} = \lbrace r | r \in \responses ,\distribution^{ w_{i}}(r) < \distribution^{w_{j}}(r) \rbrace$$ where $R_{w_{i} > w_{j}}$ is the set of responses for which word $i$ is more likely, $R_{w_{i} = w_{j}}$ is the set of responses for which words $i$ and $j$ are equally likely, and $R_{w_{i} < w_{j}}$ is the set of responses for which word $j$ is more likely.

    We define the probabilities of these sets under the distribution $\distribution^{w_{i}}$
    $$p^{i}_{w_{i} >  w_{j}} = \sum_{r \in R_{w_{i} >  w_{j}}} \distribution^{w_{i}}(r),$$
    $$p^{i}_{w_{i} =  w_{j}} = \sum_{r \in R_{w_{i} =  w_{j}}} \distribution^{w_{i}}(r),$$ and
    $$p^{i}_{ w_{i} < w_{j}} = \sum_{r \in R_{ w_{i} < w_{j}}} \distribution^{w_{i}}(r)$$ 

    Similarly, we define the probabilities of these sets under the distribution $\distribution^{w_{j}}$ (Note that the sets are in the reverse order.)
        $$p^{j}_{w_{j} >  w_{i}} = \sum_{r \in R_{w_{i} <  w_{j}}} \distribution^{w_{j}}(r),$$
    $$p^{j}_{w_{j} =  w_{i}} = \sum_{r \in R_{w_{i} =  w_{j}}} \distribution^{w_{j}}(r),$$ and
    $$p^{j}_{ w_{j} < w_{i}} = \sum_{r \in R_{ w_{i} < w_{j}}} \distribution^{w_{i}}(r).$$ 

Let $TV(\distribution^{w_{i}}, \distribution^{w_{j}})$ be the total variation distance between $\distribution^{w_{i}}$ and $\distribution^{w_{j}}$.
Note that $L1(\distribution^{w_{i}}, \distribution^{w_{j}}) \geq \alpha$ implies $TV(\distribution^{w_{i}}, \distribution^{w_{j}})
\geq \nicefrac{\alpha}{2}$.
     Since $p^{i}_{w_{i} =  w_{j}} = p^{j}_{ w_{j} = w_{i}}$ by definition of the total variation of distance $TV(\distribution^{w_{i}}, \distribution^{w_{j}})
\geq \nicefrac{\alpha}{2}$ we also have
 $$    \left\| \begin{bmatrix}
        p^{i}_{w_{i} >  w_{j}}
        \\
        p^{i}_{w_{i} =  w_{j}}
        \\
         p^{i}_{w_{i} < w_{j}}
    \end{bmatrix} -
         \begin{bmatrix}
        p^{j}_{ w_{j} < w_{i}}
        \\
        p^{j}_{ w_{j} = w_{i}}
        \\
         p^{j}_{ w_{j} > w_{i}}
    \end{bmatrix} \right\|_{1}  \geq \alpha$$.

     Since $| p^{i}_{w_{i} >  w_{j}} - p^{j}_{ w_{j} < w_{i}}| = | p^{i}_{w_{i} <  w_{j}} - p^{j}_{ w_{j} > w_{i}}|$ for a stationary strategy $\pi^{non}$ that is $\alpha$-L1 pairwise revealing, we have $ p^{i}_{w_{i} >  w_{j}} - p^{j}_{ w_{j} < w_{i}} \geq \nicefrac{\alpha}{2}$ for all $i\neq j \in [\numwordspercategory]$.

     Let the chameleon use an arbitrary distribution to give a response. The chameleon uses the following strategy to estimate the secret word. 
     For every pair $w\neq w' \in \words$, build a discrete probability distribution such that $$\hat{p}_{w > w'} = \sum_{r \in R_{w > w'} \cap \lbrace r_{i} | i\neq \chameleon\rbrace} \frac{1}{\numplayers-1},$$
     $$\hat{p}_{w = w'} = \sum_{r \in R_{w = w'} \cap \lbrace r_{i} | i\neq \chameleon\rbrace} \frac{1}{\numplayers-1},$$
     and $$\hat{p}_{w < w'} = \sum_{r \in R_{w < w'} \cap \lbrace r_{i} | i\neq \chameleon\rbrace} \frac{1}{\numplayers-1}.$$ In words, the chameleon builds an empirical distribution based on the others' responses and collapses the empirical distribution into bins, the words for which $w$ is more likely, the words for which $w$ and $w'$ are equally likely, and the words for which $w'$ is more likely.

     The chameleon uses the following method to estimate the secret word 
\begin{itemize}
    \item Output $w^{i}$, if 
    \begin{equation}
         \left\| \begin{bmatrix}
        p^{i}_{w^{i} > w^{j}}
        \\
        p^{i}_{w^{i} = w^{j}}
        \\
         p^{i}_{w^{i} < w^{j}}
    \end{bmatrix} -
         \begin{bmatrix}
       \hat{p}_{w^{i} > w^{j}}
        \\
        \hat{p}_{w^{i} = w^{j}}
        \\
        \hat{p}_{w^{i} < w^{j}}
    \end{bmatrix} \right\|_{1} < \frac{\alpha}{2}
    \end{equation}
    for all $w^{i} \neq w^{j} \in \words$.
    \item Output an arbitrary $w\in \words $ otherwise.
\end{itemize}
We remark that there cannot be more than a single output in the first case. Since     \begin{equation}
         \left\| \begin{bmatrix}
        p^{i}_{w^{i} > w^{j}}
        \\
        p^{i}_{w^{i} = w^{j}}
        \\
         p^{i}_{w^{i} < w^{j}}
    \end{bmatrix} -
         \begin{bmatrix}
       \hat{p}_{w^{i} > w^{j}}
        \\
        \hat{p}_{w^{i} = w^{j}}
        \\
        \hat{p}_{w^{i} < w^{j}}
    \end{bmatrix} \right\|_{1} < \frac{\alpha}{2}
    \end{equation}  and 
        \begin{equation*}
    \left\| \begin{bmatrix}
        p^{i}_{w_{i} >  w_{j}}
        \\
        p^{i}_{w_{i} =  w_{j}}
        \\
         p^{i}_{w_{i} < w_{j}}
    \end{bmatrix} -
         \begin{bmatrix}
        p^{j}_{ w_{j} < w_{i}}
        \\
        p^{j}_{ w_{j} = w_{i}}
        \\
         p^{j}_{ w_{j} > w_{i}}
    \end{bmatrix} \right\|_{1} \geq \alpha
    \end{equation*}
    we have \begin{equation}
         \left\| \begin{bmatrix}
        p^{j}_{ w_{j} < w_{i}}
        \\
        p^{j}_{ w_{j} = w_{i}}
        \\
         p^{j}_{ w_{j} > w_{i}}
    \end{bmatrix} -
         \begin{bmatrix}
       \hat{p}_{w^{i} > w^{j}}
        \\
        \hat{p}_{w^{i} = w^{j}}
        \\
        \hat{p}_{w^{i} < w^{j}}
    \end{bmatrix} \right\|_{1} > \frac{\alpha}{2}
    \end{equation}
    due to the triangle inequality. Since $\hat{p}_{w^{i} > w^{j}} = \hat{p}_{w^{j} < w^{i}}$ and $\hat{p}_{w^{i} < w^{j}} = \hat{p}_{w^{j} > w^{i}}$, it also implies 
    \begin{equation}
         \left\| \begin{bmatrix}
        p^{j}_{ w_{j} < w_{i}}
        \\
        p^{j}_{ w_{j} = w_{i}}
        \\
         p^{j}_{ w_{j} > w_{i}}
    \end{bmatrix} -
         \begin{bmatrix}
       \hat{p}_{w^{j} < w^{i}}
        \\
        \hat{p}_{w^{j} = w^{i}}
        \\
        \hat{p}_{w^{j} > w^{i}}
    \end{bmatrix} \right\|_{1} > \frac{\alpha}{2}
    \end{equation}
    which violates the condition for the first case.

    Without loss of generality, let $i$ be the index of the secret word in $\words$ note that $\lbrace r_{l} | l\neq \chameleon\rbrace$ are independently sampled from $\distribution^{\word^{i}}$ which implies that each tuple $(\mathds{1}(r_{l} \in R_{w^{i} > w^{j}}), \mathds{1}(r_{l} \in R_{w^{i} = w^{j}}), \mathds{1}(r_{l} \in R_{w^{i} < w^{j}}))$ is sampled from the categorical distribution $$\begin{bmatrix}
        p^{i}_{w^{i} > w^{j}}
        \\
        p^{i}_{w^{i} = w^{j}}
        \\
         p^{i}_{w^{i} < w^{j}}
    \end{bmatrix}.$$
    Consequently, via Theorem 2.1 of \cite{weissman2003inequalities} we have 
    \begin{equation*}
    \Pr\left(            \left\| \begin{bmatrix}
        p^{i}_{w^{i} > w^{j}}
        \\
        p^{i}_{w^{i} = w^{j}}
        \\
         p^{i}_{w^{i} < w^{j}}
    \end{bmatrix} -
         \begin{bmatrix}
       \hat{p}_{w^{i} > w^{j}}
        \\
        \hat{p}_{w^{i} = w^{j}}
        \\
        \hat{p}_{w^{i} < w^{j}}
    \end{bmatrix} \right\|_{1} < \frac{\alpha}{2}\right) \geq 1- 2 \exp\left(-\frac{(\numplayers-1)\varphi(w^{i}, w^{j})\alpha^{2}}{16}\right)
    \end{equation*}
    where \[ \theta(w^{i}, w^{j}) = \max_{A \subseteq \lbrace p^{i}_{w^{i} >  w^{j}}, p^{i}_{w^{i} >  w^{j}}, p^{i}_{w^{i} <  w^{j} }\rbrace } \min \left(\sum_{a\in A} a, 1-\sum_{a\in A} a\right),\]
    and \(\varphi = \frac{1}{1 - 2\theta(w^{i}, w^{j})} \log \frac{1 -  \theta(w^{i}, w^{j})}{\theta(w^{i}, w^{j})}\).   

    Taking the minimum over $\phi(w^{i}, w^{j})$ over $j$ and using the union bound shows that the chameleon outputs the correct word $w^{i}$ with probability at least \[1-6(\numwordspercategory-1) \exp\left(-\frac{(\numplayers - 1) \alpha \varphi}{2} \right).\]

\end{proof}

\begin{proof}[Proof of Proposition \ref{prop:poseliminate}]
We first analyze the case $|r_{\chameleon}| = k (\numplayers + 1 -\chameleon)$, i.e., the chameleon eliminates the same number of words as the non-chameleon players. Since the non-chameleons preserve uniform probability over the remaining words to be the secret word, we have  
\begin{equation}\label{eq:chfailstoguess}
    \Pr(\secretword \not \in \response_{\chameleon} | c) = \frac{1}{\numplayers+2-\chameleon}
\end{equation}
    regardless of how the chameleon chooses $l$ words to be eliminated among $l(\numplayers+2-\chameleon)$ words. The chameleon loses in two cases: (i) it does not eliminate the secret word, but its index is $1$, it gets voted and gives the wrong response in the second chance, and (ii) it eliminates the secret word, it gets voted and gives the wrong response in the second stage. 
    Consequently, we have

    \begin{align*}
        \Pr(\text{Non-chameleons win}) 
    &=\Pr(\chameleon = 1) \Pr(\secretword \in r_{1}| c=1)\Pr(\word^{ch} \not = \secretword |\secretword \in \response_{1}, \chameleon=1) 
    \\
    &+ \sum_{i=1}^{\numplayers} \Pr(\chameleon = i) \Pr(\secretword \not \in \response_{i} | \chameleon =i) \Pr(\word^{ch}| \chameleon= i, \secretword \not \in \response_{i})
    \end{align*}

    We have 
    \[\Pr(c = 1) \Pr(\secretword \in r_{1}| c=1)\Pr(\word^{ch} \not = \secretword |\secretword \in r_{1}, c=1) = \frac{1}{\numplayers} \frac{N}{N+1} \frac{l-1}{l}\] since the chameleon has index $1$ with probability $1/\numplayers$, \eqref{eq:chfailstoguess} holds for all $c$, and there are $l$ remaining words after all players respond and all have the same probability of being the secret word.

    If (ii) happens, because it got voted, the chameleon can understand that it eliminated the secret word. We have 
    \[\Pr(c = i) \Pr(\secretword \not \in r_{i} | c =i) \Pr(\word^{ch}| c= i, \secretword \not \in r_{i}) = \frac{1}{\numplayers} \frac{1}{N+2-i} \frac{l-1}{l}\] since the chameleon has index $i$ with probability $1/\numplayers$, \eqref{eq:chfailstoguess} holds for all $c=i$, and each word that the chameleon eliminated has $1/l$ probability of being the secret word.

    Using an upper bound for the harmonics series, we have 
    \[\sum_{i=1}^{\numplayers} \Pr(c = i) \Pr(\secretword \not \in r_{i} | c =i) \Pr(\word^{ch}| c= i, \secretword \not \in r_{i}) \geq \frac{\log(\numplayers+1) + \frac{1}{2\numplayers+3} + \gamma -1 }{\numplayers} \frac{l -1}{l}
    \] where $\gamma\approx 0.577$ is the Euler–Mascheroni constant.
    
    Adding up the losing probabilities for both cases and using $N\geq 3$, we get 
    \[\Pr(\text{Non-chameleons win}) \geq \frac{\log(\numplayers + 1) + 0.4}{\numplayers} \frac{l - 1}{l}. \]

    In the case, $|r_{\chameleon}| \neq k (\numplayers + 1 -\chameleon)$, i.e., the chameleon does not eliminate the same number of words as the non-chameleon players. The non-chameleons can identify the chameleon with probability $1$ regardless of the chameleon's index since the chameleon gets voted because it eliminated a wrong number of words. The probability that the chameleon guesses the secret word correctly is trivially lower than the case $|r_{\chameleon}| = k (\numplayers + 1 -\chameleon)$. Consequently, we again have \[\Pr(\text{Non-chameleons win}) \geq \frac{\log(\numplayers + 1) + 0.4}{\numplayers} \frac{l - 1}{l}. \]
\end{proof}

}{}

\end{document}